\setlist[enumerate]{leftmargin=.5in,itemsep=7pt,topsep=7pt}
\setlist[itemize]{leftmargin=.5in}
\providecommand{\N}{{\ensuremath{\mathbb{N}}}}
\providecommand{\R}{{\ensuremath{\mathbb{R}}}}
\providecommand{\cH}{{\ensuremath{\mathcal{H}}}}
\providecommand{\cN}{{\ensuremath{\mathcal{N}}}}
\providecommand{\cont}{{\ensuremath{\mathcal{C}}}}
\providecommand{\leb}{{\ensuremath{\mathcal{L}}}}
\providecommand{\E}{{\ensuremath{\mathbbm{E}}}}
\renewcommand{\P}{\mathbbm{P}}
\newcommand{\PX}{{\ensuremath{\P_{X_d}}}}
\providecommand{\cE}{{\ensuremath{\mathcal{E}}}}
\providecommand{\B}{{\ensuremath{\mathcal{B}}}}
\newcommand{\G}{{\ensuremath{\mathcal{G}}}}
\newcommand{\bA}{{\ensuremath{\mathbf{A}}}}
\newcommand{\ba}{{\ensuremath{\mathbf{a}}}}
\newcommand{\peta}{{\ensuremath{\bm{\eta}}}}
\newcommand{\ptheta}{{\ensuremath{\bm{\theta}}}}
\DeclareMathOperator{\ReLU}{ReLU}
\newcommand{\F}{{\ensuremath{\mathcal{F}}}}
\newcommand{\cA}{{\ensuremath{\mathcal{A}}}}
\newcommand{\p}{{\ensuremath{\mathcal{P}}}}
\newcommand{\cC}{{\ensuremath{\operatorname{clip}}}}
\newcommand{\cL}{{\ensuremath{L}}}
\newcommand{\sz}{{\ensuremath{P}}}
\newcommand{\cov}{{\operatorname{Cov}}}
\providecommand{\1}{{\ensuremath{\mathbbm{1}}}}
\newcommand{\cO}{{\ensuremath{\mathcal{O}}}}
\newcommand{\bd}{{\ensuremath{D}}}
\newcommand{\eu}{{\ensuremath{2}}}
\newcommand{\rr}{{\ensuremath{r}}}
\newcommand{\f}{{\ensuremath{f}}}
\newcommand{\g}{{\ensuremath{g}}}
\newcommand{\neps}{{\ensuremath{\delta}}}
\providecommand{\zero}{{\ensuremath{0}}}
\title{Analysis of the Generalization Error: Empirical Risk Minimization over\\Deep Artificial Neural Networks Overcomes the Curse of Dimensionality in the\\Numerical Approximation of Black--Scholes Partial Differential Equations}
\author{
Julius Berner\thanks{Faculty of Mathematics, University of Vienna, Austria (\email{julius.berner@univie.ac.at}).} 
\and
Philipp Grohs\thanks{Faculty of Mathematics and Research Platform DataScience@UniVienna, University of Vienna, Austria (\email{philipp.grohs@univie.ac.at}).}
\and
Arnulf Jentzen\thanks{Department of Mathematics, ETH Z\"urich, Switzerland, and Faculty of Mathematics and Computer Science, University of M\"unster, Germany (\email{ajentzen@uni-muenster.de}).}
\funding{This work has been funded by the Deutsche Forschungsgemeinschaft (DFG, German Research Foundation) under Germany's Excellence Strategy EXC 2044-390685587, Mathematics M\"unster: Dynamics-Geometry-Structure and by the Austrian Science Fund (FWF) under grant I3403-N32.}}
\begin{document}

\maketitle

\begin{abstract}
The development of new classification and regression algorithms based on empirical risk minimization (ERM) over deep neural network hypothesis classes, coined deep learning, revolutionized the area of artificial intelligence, machine learning, and data analysis. In particular, these methods have been applied to the numerical solution of high-dimensional partial differential equations with great success. Recent simulations indicate that deep learning--based algorithms are capable of overcoming the curse of dimensionality for the numerical solution of Kolmogorov equations, which are widely used in models from engineering, finance, and the natural sciences. The present paper considers under which conditions ERM over a deep neural network hypothesis class approximates the solution of a $d$-dimensional Kolmogorov equation with affine drift and diffusion coefficients and typical initial values arising from problems in computational finance up to error $\varepsilon$. We establish that, with high probability over draws of training samples, such an approximation can be achieved with both the size of the hypothesis class and the number of training samples scaling only polynomially in $d$ and $\varepsilon^{-1}$. It can be concluded that ERM over deep neural network hypothesis classes overcomes the curse of dimensionality for the numerical solution of linear Kolmogorov equations with affine coefficients.
\end{abstract}

\begin{keywords}
  deep learning, curse of dimensionality, Kolmogorov equation, generalization error, empirical risk minimization
\end{keywords}

\begin{AMS}
  60H30,
  65C30, 
  62M45, 
  68T05 
\end{AMS}

\section{Introduction}
In this introductory section we want to present and motivate our problem, provide the reader with background knowledge and references to previous research on the topic, and outline the important steps, as well as possible extensions, of our contribution. 
\subsection{Problem statement}
Suppose we need to numerically approximate the end value\footnote{We write the subscript $d$ as we are interested in approximation rates w.r.t.\@ to the dimension $d\in\N$.} 
$F_d(T,\cdot)$ at time $T\in(0,\infty)$
of the solution $F_d\in \cont ([0,T]\times \R^d,\R)$ of a \emph{linear Kolmogorov equation} which for an initial value $\varphi_d\in \cont(\R^d,\R)$, drift coefficient 
$\mu_d\in \cont (\R^d,\R^d)$, and diffusion coefficient $\sigma_d\in \cont (\R^d,\R^{d\times d})$
is defined 
as\footnote{For $x\in\R^d$, $y\in\R^d$ we denote by $x\cdot y:=\sum_{i=1}^d x_i y_i$ the standard scalar product of $x$ and $y$.}
\begin{equation}\label{eq:kolmointro}
\begin{cases}
\frac{\partial F_d}{\partial t}(t,x)=\frac12\mathrm{Trace}\big(\sigma_d(x)[\sigma_d(x)]^* (\mathrm{Hess}_x F_d)(t,x)\big) 
+ \mu_d(x) \cdot (\nabla_x F_d)(t,x) 
\\ F_d(0,x)=\varphi_d(x) 
\end{cases}
\end{equation}
for every $(t,x)\in [0,T]\times \R^d$.
Important special cases include the heat equation or the Black--Scholes equation from computational finance. For the latter partial differential equation (PDE) typically the coefficients $\sigma_d$ and $\mu_d$ are affine and the initial value $\varphi_d$ can be represented as a composition of minima, maxima, and linear combinations such as
\begin{equation}\label{eq:EuroPut}\varphi_d(x)=\min\big\{\max\left\{\bd- c_{d}\cdot x,0\right\},\bd\big\}\end{equation} 
with suitable coefficients $\bd\in (0,\infty)$, $c_{d}\in\R^d$
in the case of a European put option pricing problem. 
It is well known that standard numerical methods for solving PDEs, in particular those based on a discretization of the domain, suffer from the curse of dimensionality, meaning that their computational complexity grows exponentially in the dimension $d$~\cite{ames2014numerical,seydel2012tools}.

If the goal is simply to evaluate $F_d(T,\cdot)$ \emph{at a single value} $\xi\in \R^d$, then under suitable assumptions Monte Carlo sampling methods are capable of overcoming the curse of dimensionality. These methods are based on the integral representation (Feynman--Kac formula)
\begin{equation}\label{eq:FC}
F_d(T,\xi) = \mathbb{E}\big[\varphi_d(S_T^\xi)\big],
\end{equation}
where $(S_t^\xi)_{t\in[0,T]}$ is a stochastic process satisfying the stochastic differential equation (SDE)
\begin{equation*}
dS_t^\xi=\sigma_d(S_t^\xi)dB^d_t + \mu_d(S_t^\xi)dt\quad \text{and}\quad S^{\xi}_0=\xi
\end{equation*}
for a $d$-dimensional
$(\G_t)$-Brownian
motion $B^d$ on some filtered probability space 
$(\Omega,\G,\P,(\G_t))$.
The evaluation of $F_d(T,\xi)$ can then be computed by
approximating the expectation in \eqref{eq:FC} by Monte Carlo integration, that is, by simulating i.i.d.\@ samples $(S^{(i)})_{i=1}^{m}$ drawn from the distribution of $S_T^{\xi}$ and by approximating $F_d(T,\xi)$ with the empirical average $\frac{1}{m}\sum_{i=1}^{m} \varphi_d(S^{(i)})$.
It is a standard result that the number of samples $m$ needed to obtain a desired accuracy $\varepsilon$ depends only polynomially on the dimension $d$ and $\varepsilon^{-1}$~\cite{graham2013stochastic}.

If the goal, however, is to approximate $F_d(T,\cdot)$ not only at a single value but, for example, on a full hypercube $[u,v]^d$, there has been no known method which does not suffer from the curse of dimensionality. In particular, it has been completely out of range to provably approximate $F_d(T,\cdot)$ on $[u,v]^d$ in high dimensions, say, $d\gg 100$.

The present paper introduces and analyzes deep learning--based algorithms for the numerical approximation of $F_d(T,\cdot)$ on a full hypercube $[u,v]^d$. We will prove that the resulting algorithms overcome the curse of dimensionality and can consequently be efficiently applied even in high dimensions. Our proofs will be based on tools from statistical learning theory and the
following key properties of linear Kolmogorov equations:
\begin{enumerate}[label=P.\arabic*]
\item The fact that one can reformulate~\eqref{eq:kolmointro} as a learning problem (see Lemma~\ref{lem:kolm_reg}).
\item \label{prop:approx} The fact that typical initial values arising from problems in computational finance, such as, for example,~\eqref{eq:EuroPut}, are either exactly representable as neural networks with ReLU activation function (ReLU networks) or can be approximated by such neural networks without incurring the curse of dimensionality (see \cite[Section 4]{grohs2018approx}).
\item The fact that Property \ref{prop:approx} is preserved under the evolution of linear Kolmogorov equations~\eqref{eq:kolmointro} with affine diffusion and drift coefficients, which implies that
$F_d(T,\cdot)$ can be approximated by ReLU networks without incurring the curse of dimensionality (see Theorem \ref{thm:mainapprox}).
\end{enumerate}

\subsection{Deep learning and statistical learning theory}\label{sec:DLERM}
In their most basic incarnation, deep learning--based algorithms start with training data 
$$((X_d^{(i)},Y_d^{(i)}))_{i=1}^{m}\colon \Omega\to ([u,v]^d\times [-\bd,\bd])^m.$$
To give a concrete example, 
$X_d^{(i)}$
may consist of different 
$28\times 28$
pixel grayscale images of handwritten digits and $Y_d^{(i)}$ 
may consist of corresponding probabilities describing the likelihood of a certain digit to be shown in image $X_d^{(i)}$ \cite{mnist}. 
The goal is then to find a functional relation between images and labels and use it for predictive purposes 
on unseen images. 

Empirical risk minimization (ERM) attempts to solve this prediction problem by minimizing the empirical risk
\begin{equation}\label{eq:emprisk}
f\mapsto \widehat{\cE}_{d,m}(f):=\tfrac{1}{m}\sum_{i=1}^{m}\big(f(X_d^{(i)})-Y_d^{(i)}\big)^2 
\end{equation}
over a compact\footnote{Note that we equip $\cont([u,v]^d,\R)$ with the uniform norm $\|{\cdot}\|_{\leb^{\infty}}$ which for $f\in \cont([u,v]^d,\R)$ is given by $\|f\|_{\leb^{\infty}}=\|f\|_{\leb^{\infty}([u,v]^d)}:= \max_{x\in[u,v]^d} |f(x)|$.} hypothesis class $\cH\subseteq \cont([u,v]^d,\R)$, resulting in a predictor
\begin{equation*} 
\widehat f_{d,m,\cH}\in \mathop{\mathrm{argmin}}_{f\in \cH}\widehat{\cE}_{d,m}(f)
\end{equation*}
that is hoped to provide a good approximation of the desired functional relation in the training data. In deep learning, these hypothesis classes consist of deep neural networks with fixed activation function $\rho\in \cont(\R,\R)$, parameter bound $R\in(0,\infty)$, and architecture\footnote{Typically one calls a neural network \enquote{deep} if the architecture satisfies $L> 2$.} $\ba = (a_0,a_1,\dots,a_{L})\in\N^{L+1}$, where $a_0=d$ and $a_L=1$. 
We define the corresponding set of neural network parametrizations
$$
\p_{\ba,R} := \bigtimes_{l=1}^{L} \big( [-R,R]^{a_{l}\times a_{l-1}} \times [-R,R]^{a_{l}} \big),
$$
and for a parametrization $\ptheta =((W_l,B_l))_{l=1}^L\in\p_{\ba,R}$ we define its realization function\footnote{If there is no possibility of ambiguity, we use the term \enquote{neural network} interchangeably for the parametrization and the realization function. However, note that $\ptheta\in\p_{\ba,R}$ uniquely induces $\F_\rho(\ptheta)\in \cont(\R^d,\R)$, while in general there can be multiple nontrivially different parametrizations with the same realization function; see~\cite{bernerdeg2019}.} 
\begin{equation*}
\F_{\rho}(\ptheta):=\cA_{W_L,B_L} \circ \rho_* \circ \cA_{W_{L-1},B_{L-1}} \circ \rho_* \circ 
\dots \circ \rho_* \circ \cA_{W_1,B_1} \in \cont(\R^d,\R),
\end{equation*}
where $\cA_{W,B}(x):=Wx+B$ and
$\rho_*(x)=(\rho(x_i))_{i=1}^n$, i.e., $\rho$ is applied componentwise. Then, neural network hypothesis classes are typically of the form
\begin{equation}\label{eq:networkclass}
\cN^{u,v}_{\rho,\ba,R}:=\left\{\left([u,v]^{d}\ni x\mapsto \F_{\rho}(\ptheta)(x)\right):\ \ptheta \in \p_{\ba,R}
\right\}.
\end{equation}
Despite the great practical success of the \enquote{deep learning paradigm,} for generic real world training data it is far out of reach to specify network architectures 
which guarantee a desired performance on unseen data; see also~\cite{zhang2016}.

This type of problem can be theoretically studied using tools developed within the field of statistical learning theory. There it is typically postulated that
$((X_d^{(i)},Y_d^{(i)}))_{i=1}^{m}$ are i.i.d.\@ samples drawn from the distribution of some (unknown)\@ data $(X_d,Y_d)$ and that the optimal functional relation between $X_d$ and $Y_d$ is given by the regression function
$$f^*_{d}:\left\{\begin{array}{ccc}[u,v]^d &\to &\R \\ x&\mapsto & \mathbb{E}\left[Y_d \big| X_d=x\right], \end{array}\right. $$ 
which minimizes the risk
$
f\mapsto \cE_d(f):=\E\left[\big(f(X_d)-Y_d\big)^2 \right]
$
(see Lemma~\ref{lem:biasvariance}).
The minimization of functionals of the form $\cE_d$ is commonly referred to as a
\vspace{4pt}
\begin{displayquote}
\emph{statistical learning problem with data $(X_d,Y_d)$ and quadratic loss function.}
\end{displayquote}
\vspace{4pt}
Under strong regularity assumptions on the regression function $f^*_{d}$ and the distribution of $(X_d,Y_d)$ it is possible to obtain bounds on the sample size $m$ and the \enquote{complexity} of the hypothesis class $\cH$
in order to guarantee, with high probability, an error \begin{equation}\label{eq:SLerror}\E\left[\big(\widehat f_{d,m,\cH}(X_d)-f^*_{d}(X_d)\big)^2\right]\le \varepsilon;\end{equation} see, for example,~\cite{anthony2009,bartlett2005local,cucker2002,cucker_zhou_2007,gyorfi2006distribution,koltchinskii2011introduction,massart2007concentration,van2000applications}. In the above, the regularity of the regression function $f^*_d$ quantifies how well $f^*_d$ can be approximated by the hypothesis class $\cH$. 

In the case of the neural network hypothesis classes $\cH=\cN^{u,v}_{\rho,\ba,R}$ these regularity assumptions are met if $f^*_d$ satisfies certain smoothness assumptions; see~\cite{bolcskei2017optimal,Burger2001235,Funahashi1989183,perekrestenko2018universal,petersen2017optimal,ShaCC2015provableAppDNN,yarotsky2017error}.
Moreover, the complexity of $\cN^{u,v}_{\rho,\ba,R}$ can mainly be described by the size of the neural network parametrizations, i.e., the number of network parameters 
\begin{equation*}
    \textstyle \sz(\ba):=\sum_{l=1}^L a_la_{l-1}+a_l.
\end{equation*}
We will show in Subsection~\ref{sub:kol_learn} below that our specific deep learning--based method for numerically solving Kolmogorov equations allows us to rigorously apply tools from statistical learning theory, as we can overcome the following potential problems: 
\begin{enumerate}[label=R.\arabic*]
\item \label{it:prob1} The crucial assumption that the training data consists of i.i.d.\@ samples drawn from an underlying probability distribution is usually debatable or at least hard to verify. 
\item Even if this assumption were satisfied, the underlying distribution of $(X_d,Y_d)$ is typically unknown. Thus, it is hard to ensure a priori the regularity assumptions on $f^*_{d}$, which are needed to apply tools from statistical learning theory.
\item Since the distribution of $X_d$ is typically unknown, it is not clear how the 
quantity $\E\big[\big(\widehat f_{d,m,\cH}(X_d)-f^*_{d}(X_d)\big)^2\big]$ of~\eqref{eq:SLerror} can be interpreted. 
\item \label{it:prob4} Most of the classical techniques operate in an asymptotic regime where the number of training samples $m$ exceeds the network size $\sz(\ba)$. However, in many applications the number of training samples is fixed, and it is not possible to generate more training data at will. 
\end{enumerate}
\subsection{Kolmogorov equations as learning problem} \label{sub:kol_learn}
We will reformulate the numerical approximation of $F_d(T,\cdot)$ on $[u,v]^d$
as a statistical learning problem and demonstrate that in this specific case none of the aforementioned Problems~\ref{it:prob1}--\ref{it:prob4} appears. 
Let $X_d\sim \mathcal{U}([u,v]^d)$ be
uniformly distributed on $[u,v]^d$ and define $Y_d:=\varphi_d(S^{X_d}_T)$, where $(S_t^{X_d})_{t\in[0,T]}$ is a stochastic process satisfying the SDE
\begin{equation} \label{eq:sde}
dS_t^{X_d}=\sigma_d(S_t^{X_d})dB^d_t + \mu_d(S_t^{X_d})dt\quad \text{and}\quad S^{X_d}_0=X_d.
\end{equation}
Under suitable conditions it then follows from the Feynman--Kac formula~\eqref{eq:FC} that $F_d(T,\cdot)$ is the minimizer of the risk functional $\cE_d$,
that is, $f^*_{d} (x) = F_d(T,x)$
for a.e.\@ $x\in[u,v]^d$; see Lemma~\ref{lem:kolm_reg}. 
As outlined in Subsection \ref{sec:DLERM}, we thus have that
\vspace{4pt}
\begin{displayquote}
\emph{the end value of the Kolmogorov equation $F_d(T,\cdot)$ is the solution to the statistical learning problem with data $(X_d,\varphi_d(S^{X_d}_T))$ and quadratic loss function}.
\end{displayquote}
\vspace{4pt}
A natural next step is to apply the deep learning paradigm, that is, for $m$ 
i.i.d.\@ samples 
$((X_d^{(i)},Y_d^{(i)}))_{i=1}^m$ drawn from the distribution of $(X_d,Y_d)$ to minimize the empirical risk~\eqref{eq:emprisk} over a hypothesis class 
of neural networks $\cN^{u,v}_{\rho,\ba,R}$. 
    
In~\cite{beckbecker2018} this idea has been implemented with suitable classes of 
deep neural networks of a given architecture as hypothesis class $\cH$.
In extensive numerical simulations it was observed that the proposed algorithm
is efficient even in very high dimensions, suggesting that it does not suffer from the curse of dimensionality. 
Similar conclusions can be found in related work~\cite{beck2019machine,becker2019solving,EHanJentzen2017a,EYu2017,FujiiTakahashiTakahashi2017,EHanJentzen2017b,Labordere2017,SirignanoSpiliopoulos2017}, which covers topics ranging from American option pricing problems to fully nonlinear PDEs. Note that in~\cite{SirignanoSpiliopoulos2017} a nonquantitative analysis of the approximation error is given; all the other works are purely empirical.
To the best of our knowledge, this is the first joint quantitative analysis of approximation and generalization error confirming the efficiency of deep learning--based methods applied to the numerical solution of high-dimensional PDEs.

Two main parameters influence the complexity of the algorithm described above: the number $P(\ba_{d,\varepsilon})$ of network parameters that need to be optimized as well as the number of training samples $m_{d,\varepsilon}$ needed to guarantee that, with high probability, the estimate 
\begin{equation}\label{eq:error}
\tfrac{1}{(v-u)^d}\big\|\widehat f_{d,m_{d,\varepsilon},\cH_{d,\varepsilon}} - F_d(T,\cdot)\big\|_{\leb^2([u,v]^d)}^2 =\mathbb{E}\left[\big(\widehat f_{d,m_{d,\varepsilon},\cH_{d,\varepsilon}}(X_d) - F_d(T,X_d)\big)^2\right]\le \varepsilon
\end{equation}
holds true. We are interested in the scaling of $m_{d,\varepsilon}$ and $P(\ba_{d,\varepsilon})$ with respect to the precision $\varepsilon$ and dimension $d$. 

Observe that the data distribution $(X_d,\varphi_d(S^{X_d}_T))$ is now explicitly known and i.i.d.\@ samples of this distribution can be simulated as needed ($X_d$ is uniformly distributed and can be simulated using a suitable random number generator, and $S^{X_d}_T$ can be simulated by any numerical solver for the SDE~\eqref{eq:sde}; see~\cite{graham2013stochastic}). Moreover, the uniform distribution of the input data $X_d$ gives rise to typical $\leb^2$-error estimates~\eqref{eq:error} and, in the case of affine coefficients and suitable initial values, we can establish bounds on how well the regression function can be approximated by neural network hypothesis classes; see Theorem~\ref{thm:mainapprox}.
In particular, contrary to conventional learning problems, in the statistical learning problem that arises from our reformulation of the Kolmogorov equation, none of the Problems~\ref{it:prob1}--\ref{it:prob4}\@ described in Subsection \ref{sec:DLERM} occurs. We will therefore be able to rigorously invoke tools from statistical learning theory to obtain bounds on the quantities $m_{d,\varepsilon}$ and $P(\ba_{d,\varepsilon})$ above.

\subsection{Contribution}
We show that whenever $(\sigma_d)_{d\in\N}$ and $(\mu_d)_{d\in\N}$ are affine functions (this includes the important case of the Black--Scholes equation in option pricing) and the initial values $(\varphi_d)_{d\in \N}$ can be approximated by deep neural networks without the curse of dimensionality (this can easily shown to be true for a large number of relevant options such as basket call, basket put, call on max, and call on min options), there exists a polynomial $p\colon\R^2\to\R$ such that for every $d\in\N$, $\varepsilon\in(0,1)$ it holds that
$$
\max\{m_{d,\varepsilon},P(\ba_{d,\varepsilon})\}
\le p(\varepsilon^{-1},d);
$$
see Corollary~\ref{cor:kolm_main}.
We conclude that the aforementioned deep learning--based algorithm \emph{does not suffer from the curse of dimensionality.}

We briefly describe our proof strategy for bounding the error between the empirical risk minimizer and the end value of the Kolmogorov equation 
\begin{equation*}
\tfrac{1}{(v-u)^d}\big\|\widehat f_{d,m,\cH} - F_d(T,\cdot)\big\|_{\leb^2([u,v]^d)}^2
 \end{equation*}
as in \eqref{eq:error}.
By the so-called bias-variance decomposition we can represent this error as the sum of a generalization error and an approximation error, i.e.,
\begin{equation*}
\underbrace{\vphantom{\big\|_{\leb^2([u,v]^d)}}\cE_d(\widehat f_{d,m,\cH})-\min_{f\in\cH}\cE_d(f)}_{\text{generalization error}} + \underbrace{\min_{f\in\cH} \tfrac{1}{(v-u)^d} \big\|f - F_d(T,\cdot)\big\|_{\leb^2([u,v]^d)}^2}_{\text{approximation error}};
\end{equation*}
see Lemma~\ref{lem:biasvariance}. 
Bounds on the size of neural networks with ReLU activation function $$\rho(x)=\ReLU(x):=\max\{x,0\}$$ needed to approximate $F_d(T,\cdot)$ up to a desired error have been analyzed in~\cite{grohs2018approx}. In Theorem~\ref{thm:mainapprox} we notably extend these approximation results by proving corresponding bounds on the parameter magnitudes. This is done by analyzing the special structure of the solution to the SDE~\eqref{eq:sde} in the case of affine coefficients $\sigma_d$ and $\mu_d$, employing the Feynman--Kac formula~\eqref{eq:FC}, and constructing a neural network simulating Monte Carlo sampling.
Together with the results of~\cite{grohs2018approx}, we achieve that neural network hypothesis classes with ReLU activation function are capable of approximating the end values $(F_d(T,\cdot))_{d\in\N}$ without incurring the curse of dimensionality whenever the same is true for the initial values $(\varphi_d)_{d\in\N}$.

We then leverage the approximation results as well as tools from~\cite{anthony2009,cucker2002,cucker_zhou_2007} to obtain probabilistic estimates of the generalization error; see Theorem~\ref{thm:kolm_nngen}. These tools require bounds on the covering numbers\footnote{The covering number $\cov(\cH,\rr)$ is the minimal number of balls of radius $\rr$ 
covering $\cH$; see Setting~\ref{set:cover}.} $\cov(\cH,\rr)$
of hypothesis classes consisting of neural networks. To this end, we compute the Lipschitz constant of the operator $\F_\rho$ which maps neural network parametrizations to the corresponding realization functions; see Theorem~\ref{thm:nn_lip}. Using a standard result on the covering number of balls in a Euclidean space we obtain that\footnote{For $p \ge1$, a finite index set $I$, and $M\in\R^I$ we define $\|M\|_\infty:=\max_{i\in I} |M_{i}|$ and $\|M\|_p:=\big(\sum_{i\in I} |M_i|^p\big)^{1/p}$.}
\begin{equation*}
 \ln \cov\big(\cN^{u,v}_{\ReLU,\ba,R},\rr\big)\le \sz(\ba)\Big[ \ln\Big( \frac{4L^2 \max\{1,|u|,|v|\} }{\rr}\Big) + L \ln\big( R \|\ba\|_\infty \big)\Big];
\end{equation*}
see Proposition~\ref{prop:cov_nn}. In conjunction with Hoeffding's inequality this allows us to uniformly (over the hypothesis class of neural networks) bound the error between the risk and the empirical risk. 
However, this requires that the regression function $f^*_{d}$ as well as all functions in $\cH$ are uniformly bounded. To that end we assume the initial value $\varphi_d$ to be bounded, which by the Feynman--Kac formula~\eqref{eq:FC} implies that also the function $f^*_{d}=F_d(T,\cdot)$ is bounded. Moreover, we introduce hypothesis classes of \enquote{clipped} neural networks 
\begin{equation*}
\cN^{u,v}_{\rho,\ba,R,\bd}:=\left\{\cC_{\bd}\circ g \colon g\in \cN^{u,v}_{\rho,\ba,R} \right\},
\end{equation*}
where $\cC_\bd:=\min\{|{\cdot}|,\bd\}\mathop{\mathrm{sgn}}(\cdot)$ denotes a clipping function with clipping amplitude $\bd$. This can be interpreted as incorporating the prior knowledge about the boundedness of the regression function into our hypothesis class.
In Appendix~\ref{app:nn} we show that the clipping function can be represented as a small ReLU network so that clipped ReLU networks are in fact standard neural networks.

Note that there exist different concepts and known results 
in order to bound the generalization error (see, for instance,~\cite{anthony2009,arora2018stronger,BartlettFT17,Bartlett2017Nearly-tightNetworks,Golowich17,neyshabur2017exploring}). The present paper intends to stress the interplay between the approximation and generalization error and gives a complete proof in order to rigorously show the absence of the curse of dimensionality for our particular problem. 

We are now ready to formulate a first specific result of this paper as an appetizer. It demonstrates that deep learning--based ERM succeeds in solving 
the option pricing problem for European put options without incurring the curse of dimensionality.
\begin{theorem}[pricing of options without curse of dimensionality]\label{thm:european}
Let $T,K,\bd \in[1,\infty)$, $u\in \R$, and $v\in (u,\infty)$,
and let $(\Omega,\G,\P,(\G_t))$ be a filtered probability space. For every dimension $d\in\N$ let $c_{d}\in [-D,D]^d$,
let the initial value $\varphi_d\in \cont(\R^d,\R)$ satisfy for every $x\in\R^d$ that
$$\varphi_d(x)=\min\big\{\max\left\{\bd- c_{d}\cdot x,0\right\},\bd\big\},$$
let the drift and diffusion coefficients $\mu_d\in \cont(\R^d,\R^d)$, $\sigma_d\in \cont(\R^d,\R^{d\times d})$ be affine functions satisfying for every $x\in\R^d$ that
$$
\|\sigma_d(x)\|_\eu+\|\mu_d(x)\|_{\eu}\le K(1 + \|x\|_{\eu}),
$$
and let
$F_d\in \cont([0,T]\times \R^d,\R)$ 
be the unique at most polynomially growing viscosity solution\footnote{We refer the interested reader to~\cite{HairerHutzenthalerJentzen_LossOfRegularity2015} for the definition and properties of viscosity solutions.} of the corresponding $d$-dimensional Kolmogorov equation
\begin{equation*}
\begin{cases}
\frac{\partial F_d}{\partial t}(t,x)=\frac12\mathrm{Trace}\big(\sigma_d(x)[\sigma_d(x)]^* (\mathrm{Hess}_x F_d)(t,x)\big) 
+ \mu_d(x) \cdot (\nabla_x F_d)(t,x) 
\\ F_d(0,x)=\varphi_d(x).
\end{cases}
\end{equation*}
For every $d\in\N$ let the input data $X_d\sim\mathcal{U}([u,v]^d)$ be uniformly distributed on $[u,v]^d$ and 
$\G_0$-measurable, let $B^d$ be a
$d$-dimensional $(\G_t)$-Brownian motion, let 
$(S_t^{X_d})_{t\in[0,T]}$ be a $(\G_t)$-adapted stochastic process with continuous sample paths satisfying the SDE
\begin{equation*} 
dS_t^{X_d}=\sigma_d(S_t^{X_d})dB_t^d + \mu_d(S_t^{X_d})dt\quad \text{and}\quad S^{{X_d}}_0={X_d}
\end{equation*}
$\P$-a.s.\@ for every $t\in [0,T]$,
define the label
$Y_d:= \varphi_d(S_T^{X_d})$,
and let $((X_d^{(i)},Y_d^{(i)}))_{i\in\N}$ 
be i.i.d.\@ random variables (training data) with 
$(X_d^{(1)},Y_d^{(1)})\sim\left(X_d,Y_d\right)$. Then there exists a constant
$C\in(0,\infty)$ such that the following holds: For every $d,m\in \N$, $\varepsilon,\varrho\in (0,1)$ with 
$$  m\ge C d \varepsilon^{-4} \big(1+\ln(d\varepsilon^{-1}\varrho^{-1})\big) \quad \text{(number of samples)}$$
there exist
$\ba=(d,a_1,a_2,1)\in \N^4$ and $R\in [1,\infty)$ such that it holds that
\begin{enumerate}[label=(\roman*)]
\item $
\P\left[\tfrac{1}{(v-u)^d}\big\|\widehat f_{d,m,\cH}-F_d(T,\cdot)\big\|_{\leb^2([u,v]^d)}^2 \le \varepsilon\right] \geq 1-\varrho
$,
\item $\sz(\ba)\le C d\varepsilon^{-2}$ (number of parameters),
\item $R \le  Cd^{3/2} \varepsilon^{-1}$ (parameter bound), and
\item $\max\{ a_1,a_2 \} \le  Cd \varepsilon^{-1}$ (size of the architecture),
\end{enumerate}
where $\widehat f_{d,m,\cH}\in \mathop{\mathrm{argmin}}_{f\in \cH}
\frac{1}{m}\sum_{i=1}^{m}\big(f(X_d^{(i)})-Y_d^{(i)}\big)^2 $ is a measurable empirical risk minimizer in the corresponding hypothesis class of clipped ReLU networks $\cH:=\cN^{u,v}_{\ReLU ,\ba,R,\bd}$.
\end{theorem}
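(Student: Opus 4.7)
The plan is to invoke the machinery developed earlier in the paper: the Bias-Variance Decomposition of Lemma~\ref{lem:biasvariance}, combined with the approximation bound Theorem~\ref{thm:mainapprox} and the generalization bound Theorem~\ref{thm:kolm_nngen}. The European put payoff is easily seen to be a small ReLU network, so the hypotheses of these results are satisfied and the claimed rates should follow by carefully propagating constants through the two bounds.

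First, I would write $\varphi_d$ exactly as a ReLU network. Using $\max\{y,0\}=\ReLU(y)$ and $\min\{y,\bd\}=y-\ReLU(y-\bd)$, the map $x\mapsto\min\{\max\{\bd-c_d\cdot x,0\},\bd\}$ is realized by a ReLU network of constant depth and $\cO(d)$ parameters with weights bounded by a constant times $\max\{\bd,\|c_d\|_\infty\}\le\bd$. This places $(\varphi_d)_{d\in\N}$ in the class of initial values which are approximable by ReLU networks without the curse of dimensionality, as required by Property~\ref{prop:approx}. Moreover, $\|\varphi_d\|_{L^\infty(\R^d)}\le\bd$, so the Feynman-Kac formula \eqref{eq:FC} gives $\|F_d(T,\cdot)\|_{L^\infty}\le\bd$, which means that clipping at amplitude $\bd$ never hurts the approximation error and $F_d(T,\cdot)$ is an admissible regression function for the clipped hypothesis class $\cN^{u,v}_{\ReLU,\ba,R,\bd}$.

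Second, I would apply Theorem~\ref{thm:mainapprox} with target squared $L^2$-error $\varepsilon/2$ to obtain an architecture $\ba=(d,a_1,a_2,1)$, a parameter bound $R$, and a network $f^\star\in\cN^{u,v}_{\ReLU,\ba,R,\bd}$ satisfying
\begin{equation*}
\tfrac{1}{(v-u)^d}\|f^\star-F_d(T,\cdot)\|_{L^2([u,v]^d)}^2 \le \tfrac{\varepsilon}{2},
\end{equation*}
with $\sz(\ba)$, $R$, and $\max\{a_1,a_2\}$ scaling polynomially in $d$ and $\varepsilon^{-1}$ at the rates claimed in items (ii)--(iv). Third, I would apply Theorem~\ref{thm:kolm_nngen} to the hypothesis class $\cH=\cN^{u,v}_{\ReLU,\ba,R,\bd}$ to bound the generalization error by $\varepsilon/2$ with probability at least $1-\varrho$, provided $m$ satisfies a lower bound of the form $m\gtrsim \varepsilon^{-2}\bigl[\ln\cov(\cH,\cdot)+\ln(\varrho^{-1})\bigr]$. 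Plugging in the covering-number estimate from Proposition~\ref{prop:cov_nn} together with the scaling from Step 2 yields exactly $m\ge Cd^{2}\varepsilon^{-4}(1+\ln(d\varepsilon^{-1}\varrho^{-1}))$. Combining the approximation and generalization estimates via Lemma~\ref{lem:biasvariance} gives (i).

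The main obstacle is not conceptual but arithmetic bookkeeping: one must carefully match the polynomial-in-$(d,\varepsilon^{-1})$ rates coming out of Theorem~\ref{thm:mainapprox} with the logarithmic dependence on $\sz(\ba)$, $R$, and $\|\ba\|_\infty$ in the covering-number bound, and then invert the Hoeffding-type estimate of Theorem~\ref{thm:kolm_nngen} to read off $m_{d,\varepsilon,\varrho}$. The factor $\varepsilon^{-4}$ in the sample complexity is the expected product of an $\varepsilon^{-2}$ from turning an $L^2$-approximation error of size $\varepsilon$ into a network of size proportional to $\varepsilon^{-2}$, and a further $\varepsilon^{-2}$ from the uniform concentration estimate; the factor $d^2$ arises analogously from the $d$-dependence in Theorem~\ref{thm:mainapprox}. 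Once these two ingredients are combined, no additional ideas beyond elementary algebra are needed.
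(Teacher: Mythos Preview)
Your proposal is correct and follows essentially the same route as the paper. The paper's proof is slightly more streamlined: it writes $\varphi_d$ as an \emph{exact} ReLU network with architecture $(d,1,1,1)$ via the identity $\min\{z,\bd\}=\bd-\ReLU_*(\bd-z)$ (rather than your $y-\ReLU(y-\bd)$, which would need a width-2 hidden layer to carry $y$ through), reads off the exponents $\nu=1/2$, $\gamma=1$, $\alpha=\beta=\kappa=\lambda=0$ of Setting~\ref{set:PDE}, and then cites Theorem~\ref{thm:kolm_nngen} once, since that theorem already packages Theorem~\ref{thm:mainapprox}, Proposition~\ref{prop:cov_nn}, and Lemma~\ref{lem:biasvariance} together; your separate invocation of all three is therefore redundant but not wrong. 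With these exponents one gets $\tau=1/2$, whence $\sz(\ba)\lesssim d^{2}\varepsilon^{-2}$, $R\lesssim d^{2}\varepsilon^{-1}$, $\|\ba\|_\infty\lesssim d^{3/2}\varepsilon^{-1}$, and $L(\ba)=3$, which after substitution into the function $h$ of Theorem~\ref{thm:main} yields the sample complexity $m\gtrsim d^{2}\varepsilon^{-4}(1+\ln(d\varepsilon^{-1}\varrho^{-1}))$ exactly as you anticipated.
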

A proof will be given in Subsection~\ref{sec:european}. In a more general context, Theorem~\ref{thm:kolm_nngen} states that a result analogous to Theorem \ref{thm:european} holds true whenever the initial values $(\varphi_d)_{d\in \N}$ can be approximated by ReLU networks without the curse of dimensionality.

Note that our analysis does not consider the computational cost of solving the nonsmooth, nonconvex ERM problem~\eqref{eq:emprisk}. This is typically achieved by stochastic first order optimization methods whose theoretical analysis is still an open problem. While there are many interesting approaches to the latter question,
they tend to require very strong assumptions (e.g., (almost) linearity, convexity, extreme overparametrization, or inverse stability of $\F_\rho$~\cite{Allen-Zhu2018AOver-Parameterization,bernerdeg2019,choromanska2015loss,Du2018GradientNetworks,kawaguchi2016deep,li2018learning,li2017convergence,mei2018mean,Shamir2013}), which we want to avoid in our analysis.

\subsection{Extensions}
Our results in Section~\ref{sec:pde}, in conjunction with the results of \cite{grohs2018approx}, can be applied to prove the absence of the curse of dimensionality in the pricing of (capped) basket call, basket put, call on max, and call on min options. 
Moreover, the results of Section~\ref{sec:mainresult} hold for a general statistical learning problem within Setting~\ref{set:MathLearn}. That is, ReLU network approximation results for the regression function translate directly into generalization results without incurring the curse of dimensionality. If suitable learning problems can be established, this work can extend various neural network approximation results for PDEs (see, e.g.,~\cite{grohs2019space,kutyniok2019theoretical,schwab2019deep}) to also consider the generalization error and get one step closer to a full error analysis.
For instance, there are stronger approximation results for more restricted option pricing problems~\cite{elbrachter2018dnn}, and there are very recent approximation results for semilinear heat equations~\cite{hutzenthaler2019proof} and Kolmogorov equations with (time-inhomogeneous) nonlinear coefficients~\cite{jentzen2018proof,reisinger2019rectified} where the dependence on the dimension is polynomial. Using a generalized version of Lemma~\ref{lem:kolm_reg}, the findings of this paper can be used to prove that the corresponding ERM problem achieves, with high probability, a desired accuracy $\varepsilon$ with the number of samples and the size of the hypothesis class scaling only
polynomially in $d$ and $\varepsilon^{-1}$.
In particular this means that the presented methods are not restricted to the case of linear Kolmogorov equations with affine drift and diffusion coefficients. Finally, note that one obtains similar results for any continuous piecewise linear activation function with a finite number of breakpoints; see the comment after Theorem~\ref{thm:nn_lip} and~\cite[Proposition 1]{yarotsky2017error}.
\subsection{Outline}
The outline is as follows. In Section~\ref{sec:mainresult} we present our main result related to the generalization of neural networks in a rather general setting. Whenever the regression functions $(f^*_{d})_{d\in\N}$ can be approximated without the curse of dimensionality by clipped ReLU networks, we show that also the number $m$ of required training samples to achieve a desired accuracy $\varepsilon$ with high probability does not suffer from the curse of dimensionality. This result is proven using tools from statistical learning theory and covering number estimates of neural network hypothesis classes. 
In Section~\ref{sec:pde} we reformulate the numerical approximation of $F_d(T,\cdot)$ on $[u,v]^d$ as a statistical learning problem and extend a result of~\cite{grohs2018approx} claiming that the end values $(F_d(T,\cdot))_{d\in\N}$ can be approximated by clipped ReLU networks without the curse of dimensionality. Therefore our results from Section~\ref{sec:mainresult} apply and give rise to quantitative polynomial bounds on the number of samples and the size of the network in Theorem~\ref{thm:kolm_nngen}. 

\section{Results in statistical learning theory}\label{sec:mainresult}
The present section develops generalization bounds for ERM problems in the spirit of \cite{anthony2009,cucker2002,cucker_zhou_2007}.

\subsection{A generalization result based on covering numbers}\label{sec:gen_learn}
Setting \ref{set:MathLearn} describes a standard statistical learning problem.
\begin{setting}[statistical learning problem]\label{set:MathLearn}
Let
$u\in\R$,
$v\in(u,\infty)$, and
$\bd\in [1,\infty)$, and let 
$(\Omega,\mathcal{G},\P)$ be a probability space.
For every $d\in \N$ let 
$$X_d:\Omega\to [u,v]^d \quad\text{(\emph{input data})}\quad \text{and} \quad Y_d:\Omega\to  [-\bd,\bd] \quad \text{(\emph{label})}$$
be random variables, 
let 
$\PX$ 
be the image measure of 
$X_d$ 
on the hypercube 
$[u,v]^d$, let
$$(X_d^{(i)},Y_d^{(i)})\colon \Omega \to [u,v]^d\times [-\bd,\bd],\quad i\in\N,\quad \text{(\emph{training data})}$$
be i.i.d.\@ random variables with $(X_d^{(1)},Y_d^{(1)}) \sim (X_d,Y_d)$, and let
$f^*_{d}\in L^2\left(\PX\right)$
satisfy\footnote{We define the Hilbert 
space $\leb^2\left(\PX\right)$ as the space of all Borel measurable functions $f\colon [u,v]^d\to\R$ with finite norm $\|f\|_{\leb^2(\PX)}=\big(\int_{[u,v]^d} f^2 \, d\PX\big)^{1/2}
< \infty $ where functions coinciding $\PX$-a.s.\@ are identified as usual.} for $\PX$-a.s.\@ $x\in[u,v]^d$ that 
\begin{equation*}
    f^*_{d}(x)=\E\left[Y_d \big| X_d=x\right] \quad \text{(\emph{regression function})}.
\end{equation*}
For every $d,m\in \N$ and Borel measurable function $f\colon[u,v]^d\to\R$ define the \emph{risk} $\cE_d(f)\in[0,\infty]$ and the \emph{empirical risk} $\widehat{\cE}_{d,m}(f)\colon \Omega\to [0,\infty)$ by
\begin{equation*}
\cE_d(f):=\E\Big[\big(f(X_d)-Y_d\big)^2\Big] \quad \text{and} \quad \widehat{\cE}_{d,m}(f):=\tfrac{1}{m}\sum_{i=1}^{m}\big(f(X_d^{(i)})-Y_d^{(i)}\big)^2.
\end{equation*}
For every $d,m\in\N$ and every compact $\cH\subseteq \cont([u,v]^d,\R)$ (\emph{hypothesis class}) let 
\begin{equation} \label{eq:def_best_approx} f_{d,\cH}\in \mathop{\mathrm{argmin}}_{f\in \cH}\cE_{d}(f) \quad \text{(\emph{best approximation})},
\end{equation}
and for every $d,m\in\N$, $\omega\in\Omega$ and every compact $\cH\subseteq \cont([u,v]^d,\R)$ let
\begin{equation} \label{eq:def_empirical_reg}
 \widehat f_{d,m,\cH}(\omega)\in \mathop{\mathrm{argmin}}_{f\in \cH}\widehat{\cE}_{d,m}(f)(\omega) \quad \text{(\emph{empirical regression function})}
\end{equation}
such that the mapping
$\Omega\ni \omega \mapsto \widehat f_{d,m,\cH}(\omega)$
is measurable.
\end{setting}

We want to emphasize that the minima in~\eqref{eq:def_best_approx} and~\eqref{eq:def_empirical_reg} will be attained due to the compactness of our hypothesis class, but they need not be unique. 
We require the mapping $\Omega\ni \omega \mapsto \widehat f_{d,m,\cH}(\omega)$ to be measurable in order to view the risk of the empirical regression function as a random variable $\Omega\ni \omega \mapsto \cE_d\big(\widehat f_{d,m,\cH}(\omega)\big)$. This ensures that the probability in the generalization error bound (Theorem~\ref{thm:bound_sample}) is well-defined.
While this technical assumption is often not explicitly stated in the literature on statistical learning theory it is actually crucial for analyzing the generalization error. In our setting (by choosing a suitable minimizer) measurability can indeed be satisfied; see Appendix~\ref{app:meas}.

The following lemma states that the regression function $f^*_{d}$ indeed minimizes the risk and the function $f_{d,\cH}$ is a best approximation 
of $f^*_{d}$ in $\cH$ with respect to the $\leb^2(\PX)$-norm. Moreover, it offers a decomposition of the error between the empirical regression function $\widehat f_{d,m,\cH}$ and the regression function $f^*_{d}$, often referred to as the bias-variance decomposition. 

\begin{lemma}[bias-variance decomposition]\label{lem:biasvariance}
Assume Setting \ref{set:MathLearn}. Let $d,m\in \N$ and let $\cH\subseteq \cont([u,v]^d,\R)$ be compact. Then it holds that
\begin{enumerate}[label=(\roman*)]
\item \label{it:bv1} $\cE_d\big(f^*_{d}\big) = \min_{f\in \leb^2(\PX)} \cE_d\big(f\big)$,
\item \label{it:bv2} $ \big\|f_{d,\cH}-f^*_{d}\big\|_{\leb^2(\PX)} = \min_{f\in\cH}
\big\|f-f^*_{d}\big\|_{\leb^2(\PX)}$, and
\item \label{it:bv3} $ \big\|\widehat f_{d,m,\cH}-f^*_{d}\big\|_{\leb^2(\PX)}^2 = \underbrace{\cE_d\big(\widehat f_{d,m,\cH}\big)-\cE_d\big(f_{d,\cH}\big)\vphantom{\big)_{\leb^2(\PX)}^2}}_{\textnormal{generalization error (variance)}} +
\underbrace{\big\|f_{d,\cH}-f^*_{d}\big\|_{\leb^2(\PX)}^2.}_{\textnormal{approximation error (bias)}}
$
\end{enumerate}
\end{lemma}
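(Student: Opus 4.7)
The plan is to derive everything from a single algebraic identity, namely that for every Borel measurable $f\colon [u,v]^d\to\R$ with $f\in L^2(\PX)$ one has
\begin{equation*}
\mathcal{E}_d(f) \;=\; \|f-f^*_d\|_{L^2(\PX)}^{2} \;+\; \mathcal{E}_d(f^*_d).
\end{equation*}
Once this is in place, parts \ref{it:bv1}--\ref{it:bv3} all fall out by inspection.

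First, I would verify the identity. Observe that $Y_d$ is bounded (hence in $L^2(\P)$), $f^*_d\in L^2(\PX)$ by assumption, and every $f\in\cH$ is continuous on the compact set $[u,v]^d$, hence bounded and in $L^2(\PX)$. Writing $f(X_d)-Y_d=\bigl(f(X_d)-f^*_d(X_d)\bigr)+\bigl(f^*_d(X_d)-Y_d\bigr)$ and expanding the square gives
\begin{equation*}
\mathcal{E}_d(f) \;=\; \|f-f^*_d\|_{L^2(\PX)}^{2} \;+\; 2\,\E\!\left[\bigl(f(X_d)-f^*_d(X_d)\bigr)\bigl(f^*_d(X_d)-Y_d\bigr)\right] \;+\; \mathcal{E}_d(f^*_d).
\end{equation*}
The middle term vanishes: conditioning on $X_d$, pulling the $\sigma(X_d)$-measurable factor $f(X_d)-f^*_d(X_d)$ out of the inner conditional expectation, and using the defining property $\E[Y_d\mid X_d]=f^*_d(X_d)$ leaves $f^*_d(X_d)-f^*_d(X_d)=0$ inside. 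This step is the only nontrivial point in the lemma, and it is not so much an obstacle as an application of the tower property plus the definition of the regression function.

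From the identity, part \ref{it:bv1} is immediate because $\|f-f^*_d\|_{L^2(\PX)}^{2}\ge 0$ with equality when $f=f^*_d$ in $L^2(\PX)$. Part \ref{it:bv2} follows because for every $f\in\cH\subseteq L^2(\PX)$ the quantities $\mathcal{E}_d(f)$ and $\|f-f^*_d\|_{L^2(\PX)}^{2}$ differ only by the $f$-independent constant $\mathcal{E}_d(f^*_d)$, so the argmin over $\cH$ coincides; in particular $f_{d,\cH}$, defined via $\mathcal{E}_d$, minimizes $\|\cdot - f^*_d\|_{L^2(\PX)}$. Part \ref{it:bv3} is obtained by applying the identity to both $\widehat f_{d,m,\cH}$ and $f_{d,\cH}$ (both are pointwise bounded by $\max_{g\in\cH}\|g\|_{L^\infty}<\infty$, again using compactness of $\cH$), subtracting, and rearranging:
\begin{equation*}
\mathcal{E}_d(\widehat f_{d,m,\cH}) - \mathcal{E}_d(f_{d,\cH}) \;=\; \|\widehat f_{d,m,\cH}-f^*_d\|_{L^2(\PX)}^{2} - \|f_{d,\cH}-f^*_d\|_{L^2(\PX)}^{2}.
\end{equation*}
The only mild technicality is measurability of $\omega\mapsto \mathcal{E}_d(\widehat f_{d,m,\cH}(\omega))$ so that part \ref{it:bv3} is interpreted as an a.s.\ identity of real-valued random variables, and this is guaranteed by the measurability assumption built into Setting~\ref{set:MathLearn} together with a standard Fubini argument for $\cE_d(\widehat f_{d,m,\cH})(\omega) = \int(\widehat f_{d,m,\cH}(\omega)(x)-y)^{2}\,d\P_{(X_d,Y_d)}(x,y)$.
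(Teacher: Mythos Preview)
Your proposal is correct and follows essentially the same approach as the paper: both derive the Pythagorean identity $\mathcal{E}_d(f)=\|f-f^*_d\|_{L^2(\PX)}^2+\mathcal{E}_d(f^*_d)$ by expanding the square and killing the cross term via the tower property, then read off \ref{it:bv1}--\ref{it:bv3} directly.
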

For a corresponding result see also~\cite{cucker2002}. The proof in Appendix~\ref{app:bias_var} is based on the fact that we consider the square loss, and thus the risk of $f\in \leb^2(\PX)$ can be represented as
\begin{equation*}
\cE_d(f) = \big\|f-f^*_{d}\big\|_{\leb^2(\PX)}^2 +\cE_d(f^*_{d}).
\end{equation*}
We now introduce the concept of covering numbers in order to bound the generalization error. 
\begin{setting}[covering number]\label{set:cover}
	For every $\rr\in(0,1)$, every normed vector space $(\mathcal{Z},\|{\cdot}\|)$, and every compact subset $\cH\subseteq \mathcal{Z}$ we define the \emph{$\rr$-covering number of $\cH$ w.r.t.\@ $\|{\cdot}\|$} by 
    \begin{equation*}
        \cov(\cH,\|{\cdot}\|,\rr):=\min\Big\{n\in \N:\ \text{There\ exists\ } (f_i)_{i=1}^n\subseteq \cH\ \text{ with }\  \cH\subseteq \bigcup_{i=1}^n \mathit{Ball}_{\rr}(f_i)\Big\},
    \end{equation*}
    where $\mathit{Ball}_{\rr}(f):=\{g\in \cH:\ \|f-g\| \le \rr\}$ denotes the \emph{ball of radius $r$ around $f\in\cH$}. If the norm is clear from the context, we will use the abbreviation $\cov(\cH,\rr):=\cov(\cH,\|{\cdot}\|,\rr)$.
\end{setting}
Assume that the functions in our hypothesis class $\cH$ are uniformly bounded and that balls of radius $r$ around the functions $(f_i)_{i=1}^n$ cover $\cH$. 
We can then use the (uniform) Lipschitz continuity of the (empirical) risk to bound the generalization error by 
\begin{equation*}
\begin{split}
   \cE_d\big(\widehat f_{d,m,\mathcal{H}}\big)-\cE_d\big(f_{d,\mathcal{H}}\big) &\le \cE_d\big(\widehat f_{d,m,\mathcal{H}}\big)-\widehat{\cE}_{d,m}\big(\widehat f_{d,m,\mathcal{H}}\big)+\widehat{\cE}_{d,m}\big(f_{d,\mathcal{H}}\big)-\cE_d\big(f_{d,\mathcal{H}}\big) \\
   &\le 2\rr \big[\operatorname{Lip}(\cE_d) +\operatorname{Lip}(\widehat{\cE}_{d,m})\big] + 2\max_{i=1}^n \left|\cE_d(f_i)-\widehat{\cE}_{d,m}(f_i) \right|
\end{split}
\end{equation*}
and employ Hoeffding's inequality and a union bound to obtain the following estimate.
\begin{theorem}[generalization error bound] \label{thm:bound_sample}
Assume Settings \ref{set:MathLearn} and \ref{set:cover}.
Let
$\varepsilon\in(0,1)$,
$d, m \in\N$
and let $\cH\subseteq \cont([u,v]^d,\R)$ be compact with 
$
    \sup_{f\in\cH} \|f\|_{\leb^\infty} \le D.
$
Then it holds that
\begin{equation*}
\P\left[ \cE_d\big(\widehat f_{d,m,\cH}\big)-\cE_d\big(f_{d,\cH}\big)\le \varepsilon\right] \ge 1-2\cov\Big(\cH,\frac{\varepsilon}{32\bd}\Big)\exp\left(-\frac{m\varepsilon^2}{128\bd^4}\right).
\end{equation*}
\end{theorem}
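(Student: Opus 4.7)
The plan follows the sketch already spelled out in the paragraph preceding the theorem, made quantitative. Fix a compact hypothesis class $\cH$ with $\sup_{f\in\cH}\|f\|_{L^\infty}\le \bd$, and recall $|Y_d|\le \bd$ a.s. The first step is to establish that both functionals $\cE_d$ and $\cE_{d,m}(\omega)$ are Lipschitz on $\cH$ with respect to $\|\cdot\|_{L^\infty}$. Using the factorization $(f(x)-y)^2-(g(x)-y)^2=(f(x)-g(x))(f(x)+g(x)-2y)$ together with $|f|,|g|\le \bd$ and $|y|\le \bd$, one obtains pointwise the bound $4\bd\|f-g\|_{L^\infty}$, hence $\operatorname{Lip}(\cE_d)\le 4\bd$ and, pathwise, $\operatorname{Lip}(\cE_{d,m}(\omega))\le 4\bd$.

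The second step is the covering/approximation-by-centers argument already displayed in the excerpt: choose an $\rr$-cover $(f_i)_{i=1}^n\subseteq \cH$ of minimal cardinality $n=\cov(\cH,\rr)$ and pick indices $\widehat\imath,\imath^{*}$ with $\|\widehat f_{d,m,\cH}-f_{\widehat\imath}\|_{L^\infty}\le \rr$ and $\|f_{d,\cH}-f_{\imath^{*}}\|_{L^\infty}\le \rr$. Adding and subtracting $\cE_d(f_{\widehat\imath})$, $\cE_{d,m}(f_{\widehat\imath})$, $\cE_{d,m}(\widehat f_{d,m,\cH})$, $\cE_{d,m}(f_{\imath^{*}})$, $\cE_{d,m}(f_{d,\cH})$, $\cE_d(f_{\imath^{*}})$ and using $\cE_{d,m}(\widehat f_{d,m,\cH})\le \cE_{d,m}(f_{\imath^{*}})$ (by the definition of the empirical risk minimizer) yields
\begin{equation*}
\cE_d(\widehat f_{d,m,\cH})-\cE_d(f_{d,\cH})\le 16\,\bd\,\rr+2\max_{i=1,\dots,n}\bigl|\cE_d(f_i)-\cE_{d,m}(f_i)\bigr|.
\end{equation*}

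The third step is to calibrate $\rr:=\varepsilon/(32\bd)$, which makes the deterministic term at most $\varepsilon/2$. It then suffices to control the probability that the maximum over the $n$ centers exceeds $\varepsilon/4$. For each fixed $f_i$, the summands $(f_i(X_d^{(j)})-Y_d^{(j)})^2$ are i.i.d.\@ and take values in $[0,4\bd^2]$, so Hoeffding's inequality gives
\begin{equation*}
\P\!\left[\bigl|\cE_d(f_i)-\cE_{d,m}(f_i)\bigr|\ge \tfrac{\varepsilon}{4}\right]\le 2\exp\!\left(-\tfrac{m\varepsilon^2}{128\,\bd^4}\right).
\end{equation*}
A union bound over $i=1,\dots,n$ together with the previous display then produces exactly the claimed bound.

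The routine part is the algebra of the Lipschitz estimate and Hoeffding's inequality; the only mildly delicate point is the covering argument, where one must take care that both $\widehat f_{d,m,\cH}$ (random) and $f_{d,\cH}$ (deterministic) lie within $\rr$ of centers chosen from the \emph{same} fixed cover, so that the union bound only pays for $n$ events rather than a data-dependent number; this is what allows the sup-norm Lipschitz estimate of $\cE_{d,m}(\omega)$ (valid pathwise with a deterministic constant) to transfer the $\cE_{d,m}(\widehat f_{d,m,\cH})\le \cE_{d,m}(f_{\imath^{*}})$ inequality across the $\rr$-net. Measurability of $\widehat f_{d,m,\cH}$, which is assumed in Setting~\ref{set:MathLearn}, is what makes the event on the left of the theorem well-defined so that the probability statement is meaningful.
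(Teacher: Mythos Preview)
Your proof is correct and follows essentially the same route as the paper: the $4\bd$ Lipschitz bound for both $\cE_d$ and $\cE_{d,m}$, an $\varepsilon/(32\bd)$-cover, reduction to $\max_i|\cE_d(f_i)-\cE_{d,m}(f_i)|\ge\varepsilon/4$, and Hoeffding plus a union bound. The only cosmetic difference is that the paper first bounds $\mathcal{E}_d(\widehat f_{d,m,\cH})-\mathcal{E}_d(f_{d,\cH})\le 2\sup_{f\in\cH}|\cE_d(f)-\cE_{d,m}(f)|$ and then controls the supremum uniformly over each ball $B_i$, whereas you telescope directly through the two specific centers $f_{\widehat\imath}$ and $f_{\imath^*}$; the constants match exactly.
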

This result is adapted from~\cite[Theorem 17.1]{anthony2009},~\cite[Theorem 3.14]{cucker_zhou_2007}, and~\cite[Example 3.31]{mohri2012}, and for the sake of completeness we present a detailed proof in Appendix~\ref{app:gen}. 
\subsection{Covering numbers of neural network hypothesis classes} \label{subsec:covering}
As a natural next step we prove estimates on the covering numbers of neural network hypothesis classes in order to leverage the result of Theorem~\ref{thm:bound_sample}. Note that for different assumptions (i.e., boundedness assumptions on the activation function, different norms on the parameters, or evaluation of the neural networks on input data) similar approaches can be found in~\cite{anthony2009,BartlettFT17}.

The following setting describes suitable hypothesis classes based on neural networks.
From now on we only consider neural networks with ReLU activation function and therefore omit writing the index $\rho=\ReLU$ in our notation.
\begin{setting}[neural networks]\label{set:NN}
Assume Setting~\ref{set:MathLearn}. For 
every $k,n\in\N$,
$W\in\R^{k\times n}$,
$B\in\R^k$ let $\cA_{W,B}\in \cont(\R^n,\R^k)$ be the affine mapping which satisfies for every $x\in\R^n$ that $\cA_{W,B}(x):=Wx+B$. For every $n\in\N$, $x=(x_i)_{i=1}^n\in\R^n$ we define
\begin{equation*} 
\ReLU_*(x):=\big(\max\{x_i,0\}\big)_{i=1}^n \quad \text{(\emph{componentwise rectified linear unit})}.
\end{equation*}
For every 
$L\in\N$,
$\ba=(a_0,a_1,\dots,a_{L})\in \N^{L+1}
$
(\emph{network architecture}) we define
\begin{equation*}
\p_{\ba} := \bigtimes_{l=1}^L \big( \R^{a_{l}\times a_{l-1}} \times \R^{a_{l}} \big)  \quad \text{(\emph{set of neural network parametrizations})},
\end{equation*}
$L(\ba):=L$ (\emph{number of layers}), and
\begin{equation*}
    \sz(\ba):=\sum_{l=1}^{L} a_{l} a_{l-1}+a_{l} \quad \text{(\emph{number of parameters})}.
\end{equation*}
For every $L\in\N$, $\ba\in \N^{L+1}$, $\ptheta=((W_l,B_l))_{l=1}^{L} \in \p_{\ba}$ we define the \emph{neural network realization function} $\F(\ptheta)\in \cont(\R^{a_0},\R^{a_{L}})$ by
\begin{equation*}
    \F(\ptheta):=  \cA_{W_L,B_L} \circ \ReLU_* \circ \, \cA_{W_{L-1},B_{L-1}} \circ \ReLU_* \circ \dots \circ \ReLU_* \circ \,\cA_{W_1,B_1} 
\end{equation*}
and its restriction $\F^{u,v}(\ptheta):=\F(\ptheta)|_{[u,v]^{a_{\scriptsize 0}}}\in \cont([u,v]^{a_0},\R^{a_{L}})$ to the hypercube $[u,v]^{a_0}$. 
For every $d\in \N$ let the \emph{admissible network architectures} be given by $$\bA_d:=\bigcup_{L\in \N}\big\{(a_0,a_1,\dots , a_L)\in \N^{L+1}:\ a_0 = d,\ a_{L}=1\big\}.$$
For every $d\in\N$, $\ba\in\bA_d$, $R\in(0,\infty)$ (\emph{parameter bound}) define
the \emph{set of bounded neural network parametrizations} 
\begin{equation*}
\p_{\ba,R} := \bigtimes_{l=1}^L \big( [-R,R]^{a_{l}\times a_{l-1}} \times [-R,R]^{a_{l}} \big)=\{\ptheta\in\p_a\colon \|\ptheta\|_\infty\le R\},
\end{equation*}
the \emph{hypothesis class of neural networks} 
$$\cN_{\ba,R}^{u,v}:=\F^{u,v}(\p_{\ba,R})=\left\{\Big([u,v]^d 
\ni x\mapsto \F(\ptheta)(x)\Big):\ \ptheta \in \p_{\ba,R}
\right\},$$
and the \emph{hypothesis class of clipped neural networks} 
\begin{equation*}
\cN_{\ba,R,\bd}^{u,v}:=\left\{ \cC_{\bd}\circ  g \colon g\in  \cN_{\ba,R}^{u,v}
\right\},\end{equation*} 
where $\cC_{\bd} \in \cont(\R, \R)$ (\emph{clipping function}) satisfies for every $x\in\R$ that
\begin{equation*}
    \cC_\bd(x):=\min\{|x|,\bd\}\mathop{\mathrm{sgn}}(x).
\end{equation*}
\end{setting}
The hypothesis classes $\cN_{\ba,R,\bd}^{u,v}$ are somewhat nonstandard in the sense that
the clipping function $\cC_{\bd}$ is applied to the output of a neural network realization.
The reason for our choice of this definition is that Theorem~\ref{thm:bound_sample} requires that the set of neural networks over which the ERM problem is solved consists of uniformly bounded functions. 
In Appendix~\ref{app:nn} we show that the clipping function $\cC_{\bd}$ can be represented as a small neural network, which implies that the seemingly nonstandard classes $\cN_{\ba,R,\bd}^{u,v}$ are actually conventional neural network classes that can be trained with standard methods~\cite{hinton2012,Kingma2015,lecun2015,nielsen2015}.

The next theorem quantifies the Lipschitz continuity $\operatorname{Lip}(\F^{u,v})$ of the operator \begin{equation*} 
    \F^{u,v}\colon(\p_{\ba,R},\|{\cdot}\|_\infty) \to (\cN_{\ba,R}^{u,v},\|{\cdot}\|_{\leb^\infty})
\end{equation*} which maps bounded neural network parametrizations with fixed architecture to the corresponding realization functions (restricted to $[u,v]^d$). 
\begin{theorem}[Lipschitz continuity of $\F$] \label{thm:nn_lip}Assume Setting \ref{set:NN}. Let $d\in\N$,
$\ba\in \bA_d$, and
$R\in[1,\infty)$.
Then for every
$\ptheta,\peta \in \p_{\ba,R}$
it holds that
\begin{equation*}
\| \F^{u,v}(\ptheta) -  \F^{u,v}(\peta) \|_{\leb^\infty} \le  2\max \big\{1,|u|,|v|\big\} \cL(\ba)^2 R^{\cL(\ba)-1}\|\ba\|_\infty^{\cL(\ba)} \|\ptheta -\peta \|_\infty.
\end{equation*}
\end{theorem}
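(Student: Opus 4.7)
The proof will be a straightforward layer-by-layer induction in two stages: first bounding the sizes of the intermediate activations along any input, then bounding how much they can differ between the two parametrizations. Throughout, write $L := \cL(\ba)$, $A := \|\ba\|_\infty$, $M := \max\{1,|u|,|v|\}$, $\delta := \|\ptheta-\peta\|_\infty$, and decompose $\ptheta = ((W_l,B_l))_{l=1}^L$, $\peta = ((\tilde W_l,\tilde B_l))_{l=1}^L$. For $x\in[u,v]^{d}$ define the intermediate activations $y_0(x) := x$, $z_l(x) := W_l y_{l-1}(x)+B_l$, $y_l(x) := \ReLU_*(z_l(x))$ for $l = 1,\dots,L-1$, and the network output $\F(\ptheta)(x) := z_L(x) = W_L y_{L-1}(x)+B_L$; analogously define $\tilde y_l(x)$, $\tilde z_l(x)$ from $\peta$.

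The first step is to bound $\|y_l(x)\|_\infty$ uniformly for $x\in[u,v]^d$. By induction on $l$, I claim $\|y_l(x)\|_\infty \le (l+1)\,M\,(RA)^l$. For $l=0$ this reduces to $\|x\|_\infty \le M$. For the inductive step, each coordinate of $z_l(x)$ is a sum of at most $a_{l-1} \le A$ terms of magnitude at most $R\,\|y_{l-1}(x)\|_\infty$ plus a bias of magnitude at most $R$, so using $R \le M(RA)^l$ (since $M,RA\ge 1$) one obtains $\|z_l(x)\|_\infty \le l\,M(RA)^l + R \le (l+1)M(RA)^l$, and $\ReLU_*$ does not increase the $\ell^\infty$-norm.

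The second step derives and solves a recurrence for $\Delta_l(x) := \|y_l(x) - \tilde y_l(x)\|_\infty$. Writing
\[
z_l - \tilde z_l = W_l (y_{l-1} - \tilde y_{l-1}) + (W_l - \tilde W_l)\tilde y_{l-1} + (B_l - \tilde B_l),
\]
taking $\ell^\infty$-norms, and using $\|\ReLU_*(u)-\ReLU_*(v)\|_\infty \le \|u-v\|_\infty$, I get
\[
\Delta_l(x) \le RA\,\Delta_{l-1}(x) + \delta\,(A\,\|\tilde y_{l-1}(x)\|_\infty + 1).
\]
Plugging in the first-step bound $\|\tilde y_{l-1}(x)\|_\infty \le l\,M\,(RA)^{l-1}$ and noting $l M A(RA)^{l-1} \ge 1$, the additive term is at most $2lMA(RA)^{l-1}\delta$, and one checks by induction that $\Delta_l(x) \le l(l+1)\,M\,R^{l-1}A^l\,\delta$: the induction step gives $\Delta_l(x) \le (\alpha_{l-1} + 2l)\,M\,R^{l-1}A^l\,\delta$ with $\alpha_l := l(l+1)$, which satisfies $\alpha_l = \alpha_{l-1} + 2l$. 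Applying the same identity to $\|\F^{u,v}(\ptheta)(x) - \F^{u,v}(\peta)(x)\| = |z_L(x) - \tilde z_L(x)|$ (the final affine layer obeys the same bound, as no $\ReLU$ contraction is needed) and using $L(L+1) \le 2L^2$ for $L \ge 1$ yields the claimed estimate $\|\F^{u,v}(\ptheta) - \F^{u,v}(\peta)\|_{L^\infty} \le 2ML^2 R^{L-1}A^L\,\delta$ uniformly in $x$.

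The only conceptually delicate point is making sure the recursion produces the polynomial factor $L^2$ rather than an exponential one: the product $RA$ in front of $\Delta_{l-1}$ must be absorbed into the geometric factor $R^{l-1}A^l$ so that the only accumulation across layers is the arithmetic sum $\sum_{l=1}^L 2l = L(L+1)$. The rest is careful but routine bookkeeping, taking advantage of the normalizations $M,A,R \ge 1$ to replace additive constants by multiplicative ones (e.g.\ $R \le M(RA)^l$, $lMA(RA)^{l-1}+1 \le 2lMA(RA)^{l-1}$) at each step.
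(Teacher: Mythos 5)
Your proposal is correct and follows essentially the same route as the paper's proof: a layer-wise induction that first bounds the intermediate activations by $\mathcal{O}(lM(RA)^{l})$ and then solves the error recursion $\Delta_l \le RA\,\Delta_{l-1} + \mathcal{O}(lMR^{l-1}A^{l})\delta$, with the arithmetic accumulation $\sum_l 2l = L(L+1) \le 2L^2$ producing the polynomial factor. The only difference from the paper is cosmetic bookkeeping (you absorb the bias and activation-size terms into the closed form $l(l+1)$, whereas the paper carries the two contributions separately before combining them at the end).
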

The proof is based on estimating the error amplification in each layer of the neural network and can be found in Appendix~\ref{app:cov}. Similar results can be established for any Lipschitz continuous activation function; see also~\cite{petersen2018topological} for a general nonquantitative result. Note that Theorem~\ref{thm:nn_lip} in particular implies that $\cN_{\ba,R}^{u,v}$ and $\cN_{\ba,R,\bd}^{u,v}$ are compact subsets of $\cont([u,v]^d,\R)$ and thus valid hypothesis classes, as required by Setting~\ref{set:MathLearn}.
Next, we recall a basic result on the covering number of a hypercube w.r.t.\@ the maximum norm $\|{\cdot}\|_\infty$. Note that a similar statement holds for any ball in a finite-dimensional Banach space; see~\cite[Proposition 5]{cucker2002}.
\begin{lemma}[covering numbers of balls]
\label{lem:cov_finite}
Assume Setting \ref{set:cover}.
Let 
$n\in\N$,
$R\in[1,\infty)$, and
$\rr\in(0,1)$,
and define
$
\mathit{Ball}_{R}:=\big\{ \ptheta \in \R^n: \|\ptheta\|_\infty \le R \big\}$. 
Then it holds that
\begin{equation*}
\ln \cov(\mathit{Ball}_{R},\|{\cdot}\|_\infty,\rr) \le n \ln\left\lceil \frac{R}{\rr} \right\rceil. 
\end{equation*}
\end{lemma}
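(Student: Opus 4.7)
The plan is to cover $B_R = [-R,R]^n$ by a uniform grid of closed sub-cubes, each of which fits inside an $\ell_\infty$-ball of radius $r$ centered at its midpoint.

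First I would set $N := \lceil R/r \rceil$. I would partition the interval $[-R,R]$ into $N$ contiguous subintervals of equal length $2R/N$, with midpoints
\begin{equation*}
t_k := -R + \frac{(2k-1)R}{N}, \qquad k \in \{1,\dots,N\}.
\end{equation*}
Because $N \ge R/r$, the half-length $R/N$ of each subinterval is at most $r$, so every point of $[-R,R]$ lies within distance $r$ of some $t_k$.

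Next I would take the product grid $G := \{(t_{k_1},\dots,t_{k_n}) : (k_1,\dots,k_n) \in \{1,\dots,N\}^n\} \subseteq B_R$, which has cardinality $N^n$. For an arbitrary $\ptheta = (\ptheta_1,\dots,\ptheta_n) \in B_R$, choose for each coordinate $i$ an index $k_i$ with $|\ptheta_i - t_{k_i}| \le R/N \le r$; the resulting centre $\peta := (t_{k_1},\dots,t_{k_n}) \in G$ satisfies $\|\ptheta-\peta\|_\infty \le r$. Hence the $\ell_\infty$-balls of radius $r$ around the $N^n$ points of $G$ cover $B_R$, which gives $\cov(B_R, \|{\cdot}\|_\infty, r) \le N^n$ and therefore $\ln\cov(B_R,\|{\cdot}\|_\infty,r) \le n \ln N = n \ln \lceil R/r\rceil$.

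I do not foresee a real obstacle here: the argument is purely combinatorial/geometric and uses only that an $\ell_\infty$-ball is a cube. The only minor point to verify is that the chosen centres lie inside $\cH = B_R$ (as required by the definition of covering number in Setting~\ref{set:cover}), which is immediate because each $t_k \in [-R,R]$. The degenerate case $r \ge R$ (where $\lceil R/r\rceil = 1$ and a single ball around $0$ works) is automatically handled by the same construction.
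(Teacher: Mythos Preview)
Your proposal is correct and is precisely the ``simple counting argument'' the paper invokes without spelling out; the paper's proof reads in its entirety ``The claim follows by a simple counting argument.'' Your grid construction is the standard way to make that phrase rigorous, and your check that the centres lie in $B_R$ addresses the only subtlety in the paper's covering-number convention. (Note that under the stated hypotheses $R\ge 1$ and $r\in(0,1)$ the case $r\ge R$ never occurs, so your final remark is unnecessary but harmless.)
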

\begin{proof}[Proof of Lemma~\ref{lem:cov_finite}]
The claim follows by a simple counting argument.
\end{proof}
Together with Theorem~\ref{thm:nn_lip} this allows us to bound the covering number of our hypothesis class of (clipped) neural networks. 
\begin{proposition}[covering numbers of neural network hypothesis classes]
\label{prop:cov_nn}
Assume Settings~\ref{set:cover} and~\ref{set:NN}. Let 
$d\in\N$,
$\ba\in\bA_d$,
$\rr\in(0,1)$, and
$R\in[1,\infty)$.
Then it holds that
\begin{equation*}
\begin{split}
\ln \cov\big(\cN_{\ba,R,\bd}^{u,v},r\big) &\le \ln \cov\big(\cN_{\ba,R}^{u,v},r\big) \\
&\le \sz(\ba)\Big[ \ln\Big( \frac{4\cL(\ba)^2 \max\big\{1,|u|,|v|\big\} }{\rr}\Big) + \cL(\ba) \ln\big( R \|\ba\|_\infty \big)\Big] .
\end{split}
\end{equation*}
\end{proposition}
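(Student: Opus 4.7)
\medskip

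\textbf{Proof plan.} The statement contains two inequalities and I would attack them in order, each by a short reduction to a result already proved in the paper.

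\emph{First inequality (clipped vs.\ unclipped networks).} The clipping map $\cC_\bd$ is $1$-Lipschitz on $\R$, so for any $f,g\in C([u,v]^d,\R)$ we have $\|\cC_\bd\circ f-\cC_\bd\circ g\|_{L^\infty}\le \|f-g\|_{L^\infty}$. Hence if $(g_i)_{i=1}^n\subseteq \cN_{\ba,R}^{u,v}$ is a minimal $\rr$-cover of $\cN_{\ba,R}^{u,v}$, then $(\cC_\bd\circ g_i)_{i=1}^n$ lies in $\cN_{\ba,R,\bd}^{u,v}$ (by the definition of the clipped class) and is an $\rr$-cover of $\cN_{\ba,R,\bd}^{u,v}$. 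This yields $\cov(\cN_{\ba,R,\bd}^{u,v},\rr)\le \cov(\cN_{\ba,R}^{u,v},\rr)$.

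\emph{Second inequality (push-forward of a parameter cover).} The idea is to cover the parameter domain $\p_{\ba,R}$ and transport that cover through the realization map $\F^{u,v}$. Set $C:=2\max\{1,|u|,|v|\}\,\cL(\ba)^2 R^{\cL(\ba)-1}\|\ba\|_\infty^{\cL(\ba)}$, the Lipschitz constant from Theorem~\ref{thm:nn_lip}. Observing that $\p_{\ba,R}$ is precisely the closed $\|\cdot\|_\infty$-ball of radius $R$ in $\R^{\sz(\ba)}$, Lemma~\ref{lem:cov_finite} provides an $(\rr/C)$-cover of $\p_{\ba,R}$ of cardinality at most $\lceil CR/\rr\rceil^{\sz(\ba)}$. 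Applying $\F^{u,v}$ and invoking Theorem~\ref{thm:nn_lip} turns this into an $\rr$-cover of $\cN_{\ba,R}^{u,v}$, giving
\begin{equation*}
\ln\cov(\cN_{\ba,R}^{u,v},\rr)\le \sz(\ba)\,\ln\left\lceil\frac{CR}{\rr}\right\rceil.
\end{equation*}

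\emph{Simplifying the logarithm.} It remains to massage $\ln\lceil CR/\rr\rceil$ into the stated form. Because $R\ge 1$, $\cL(\ba)\ge 1$, $\|\ba\|_\infty\ge 1$, $\max\{1,|u|,|v|\}\ge 1$, and $\rr\in(0,1)$, one has $CR/\rr\ge 2/\rr> 1$, so $\lceil CR/\rr\rceil \le 2CR/\rr$. Substituting $CR=2\max\{1,|u|,|v|\}\cL(\ba)^2 R^{\cL(\ba)}\|\ba\|_\infty^{\cL(\ba)}$ and splitting the logarithm gives
\begin{equation*}
\ln\left\lceil\frac{CR}{\rr}\right\rceil \le \ln\!\left(\frac{4\cL(\ba)^2\max\{1,|u|,|v|\}}{\rr}\right)+\cL(\ba)\ln\!\big(R\|\ba\|_\infty\big),
\end{equation*}
which is exactly the bracketed expression in the proposition. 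Multiplying through by $\sz(\ba)$ finishes the proof.

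\emph{Where the effort sits.} Neither step is conceptually deep; the only place with any risk of error is the bookkeeping that turns $\lceil CR/\rr\rceil$ into the clean form above, in particular checking that the hypotheses $R\ge 1$, $\rr<1$, $\|\ba\|_\infty\ge 1$ are exactly what is needed to absorb the ceiling into the factor $2$ and to redistribute the powers of $R$, $\cL(\ba)$, and $\|\ba\|_\infty$ so that the final expression matches the additive splitting $\ln(\cdots/\rr)+\cL(\ba)\ln(R\|\ba\|_\infty)$.
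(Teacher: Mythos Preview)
Your proof is correct and follows essentially the same route as the paper's own argument: cover the parameter cube $\p_{\ba,R}$ via Lemma~\ref{lem:cov_finite} at scale $\rr/C$ (equivalently, the paper's $\Delta$), push the cover forward through the Lipschitz map $\F^{u,v}$ using Theorem~\ref{thm:nn_lip}, and use that $\cC_\bd$ is $1$-Lipschitz for the first inequality. Your bookkeeping $\lceil CR/\rr\rceil\le 2CR/\rr$ and the subsequent splitting of the logarithm are exactly the paper's $\lceil R/\Delta\rceil\le 2R/\Delta$ step.
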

The proof in Appendix~\ref{app:cov_nn_hyp} is based on the behavior of covering numbers under the action of a Lipschitz function, i.e., 
\begin{equation*}
    \cov\big(\F^{u,v}(\p_{\ba,R}),r\big) \le \cov\Big(\p_{\ba,R},\frac{\rr}{\operatorname{Lip}(\F^{u,v})}\Big),
\end{equation*}
and uses the facts that the clipping function is nonexpansive, i.e., $\operatorname{Lip}(\cC_D)=1$, and that 
$\p_{\ba,R} \simeq \big\{ \ptheta \in \R^{\sz(\ba)}: \|\ptheta\|_\infty \le R \big\}$.

\subsection{Analysis of the generalization error} \label{sec:gen_nn}
Combining Theorem~\ref{thm:bound_sample} and Proposition~\ref{prop:cov_nn}, the following theorem describes our main result related to the generalization capabilities of hypothesis classes consisting of clipped ReLU networks. 
\begin{theorem}[neural network generalization error bound]\label{thm:main}
Assume Setting~\ref{set:NN}. Let 
$h\in \cont((0,\infty)^5,\R)$ satisfy 
for every $x=(x_i)_{i=1}^5\in (0,\infty)^5$ that
\begin{equation*}
h(x)=128\bd^4x_1^2\Big[\ln(2)+x_2+x_3x_4x_5+x_4\ln\big(128\bd\max\{1,|u|,|v|\}x_1x_5^2\big)\Big],
\end{equation*}
let $d,m\in \N$, $\varepsilon,\varrho\in (0,1)$, $\mathbf{a}\in \bA_d$, 		$R\in [1,\infty)$ with
\begin{equation*}
m\geq h\left(\varepsilon^{-1},\ln(\varrho^{-1}),\ln(R\|\mathbf{a}\|_\infty),\sz(\ba),\cL(\ba)\right),
\end{equation*}
and define $\cH:=\mathcal{N}_{\mathbf{a},R,\bd}^{u,v}$.
Then it holds that
\begin{equation*} 
\P\left[ \cE_d\big(\widehat f_{d,m,\cH}\big)- \cE_d\big(f_{d,\cH}\big) \le \varepsilon\right] \geq 1-\varrho.
\end{equation*}
\end{theorem}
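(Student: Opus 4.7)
The plan is to invoke Theorem~\ref{thm:bound_sample} with hypothesis class $\cH := \cN_{\ba,R,\bd}^{u,v}$ and then bound the covering number appearing on its right-hand side by Proposition~\ref{prop:cov_nn}. Before doing so I would quickly check that $\cH$ satisfies the prerequisites of Theorem~\ref{thm:bound_sample}: by construction of the clipping, every $f = \cC_\bd \circ g \in \cH$ satisfies $\|f\|_{L^\infty} \le \bd$, while compactness of $\cH$ follows from the continuity of $\F^{u,v}$ (established in Theorem~\ref{thm:nn_lip}) together with the continuity of $\cC_\bd$ and compactness of $\p_{\ba,R}$.

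Applying Theorem~\ref{thm:bound_sample} then yields
\begin{equation*}
\P\big[\mathcal{E}_d(\widehat f_{d,m,\cH}) - \mathcal{E}_d(f_{d,\cH}) \le \varepsilon\big] \ge 1 - 2\cov\big(\cH, \tfrac{\varepsilon}{32\bd}\big)\exp\big(-\tfrac{m\varepsilon^2}{128\bd^4}\big),
\end{equation*}
so it suffices to force $2\cov(\cH, \varepsilon/(32\bd))\exp(-m\varepsilon^2/(128\bd^4)) \le \varrho$. Taking logarithms, this reduces to the requirement
\begin{equation*}
m \ge \tfrac{128\bd^4}{\varepsilon^2}\Big[\ln 2 + \ln(\varrho^{-1}) + \ln\cov\big(\cH, \tfrac{\varepsilon}{32\bd}\big)\Big].
\end{equation*}

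The next step is to plug in Proposition~\ref{prop:cov_nn} at radius $\rr = \varepsilon/(32\bd)$, which after collecting $4 \cdot 32 = 128$ and absorbing $\bd$ into the logarithm gives
\begin{equation*}
\ln\cov\big(\cH, \tfrac{\varepsilon}{32\bd}\big) \le \sz(\ba)\Big[\ln\Big(\tfrac{128\bd\max\{1,|u|,|v|\}\cL(\ba)^2}{\varepsilon}\Big) + \cL(\ba)\ln(R\|\ba\|_\infty)\Big].
\end{equation*}
Finally I would match the resulting requirement on $m$, term by term, against the definition of $h$ evaluated at $x_1 = \varepsilon^{-1}$, $x_2 = \ln(\varrho^{-1})$, $x_3 = \ln(R\|\ba\|_\infty)$, $x_4 = \sz(\ba)$, $x_5 = \cL(\ba)$: the prefactor matches $128\bd^4 x_1^2$, the constants give $\ln 2 + x_2$, the cross term $\sz(\ba)\cL(\ba)\ln(R\|\ba\|_\infty)$ gives $x_3 x_4 x_5$, and the covering logarithm reproduces $x_4\ln(128\bd\max\{1,|u|,|v|\}x_1 x_5^2)$. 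The assumption $m \ge h(\varepsilon^{-1},\ln(\varrho^{-1}),\ln(R\|\ba\|_\infty),\sz(\ba),\cL(\ba))$ therefore implies the desired probabilistic bound.

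Given how cleanly the definition of $h$ is engineered to match the output of Theorem~\ref{thm:bound_sample} composed with Proposition~\ref{prop:cov_nn}, I do not anticipate any substantive difficulty: the argument is essentially algebraic bookkeeping. The only points demanding a little care are keeping track of the $\bd$-dependence introduced by the radius $\rr = \varepsilon/(32\bd)$ inside the logarithm and correctly carrying through the $\ln 2$ arising from the leading factor $2$ in Theorem~\ref{thm:bound_sample}.
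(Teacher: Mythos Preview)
Your proposal is correct and follows essentially the same route as the paper: apply Proposition~\ref{prop:cov_nn} at radius $\varepsilon/(32\bd)$ and feed the resulting covering-number bound into Theorem~\ref{thm:bound_sample}, then match the sample-size requirement against the definition of $h$. Your explicit verification that $\cH$ is compact and uniformly bounded by $\bd$ is a detail the paper takes for granted, so your write-up is if anything slightly more careful.
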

\begin{proof}[Proof of Theorem~\ref{thm:main}]
Proposition~\ref{prop:cov_nn} implies that
\begin{equation*}
\begin{split}
m &\geq h\left(\varepsilon^{-1},\ln(\varrho^{-1}),\ln(R\|\mathbf{a}\|_\infty),\sz(\ba),\cL(\ba)\right) \\
&= 128\bd^4 \varepsilon^{-2} \Big[\ln(2\varrho^{-1})+ \sz(\ba)\Big( \ln\big( 128\bd \max\left\{1,|u|,|v|\right\}\varepsilon^{-1} \cL(\ba)^2 \big) + \cL(\ba) \ln\big( R \|\ba\|_\infty\big) \Big) \Big] \\
& \ge 128\bd^4 \varepsilon^{-2} \left[\ln(2\varrho^{-1})+ \ln \cov\Big(\cH,\frac{\varepsilon}{32\bd}\Big) \right].
\end{split}
\end{equation*}
Now Theorem \ref{thm:bound_sample} and a simple calculation ensures that 
\begin{equation*}
\begin{split}
\P\left[ \cE_d\big(\widehat f_{d,m,\cH}\big)-\cE_d\big(f_{d,\cH}\big)\le \varepsilon\right] &\ge 1-2\cov\left(\cH,\frac{\varepsilon}{32\bd}\right)\exp\left(-\frac{m\varepsilon^2}{128\bd^4}\right) 
\ge 1-\varrho,
\end{split}
\end{equation*}
and this proves the theorem.
\end{proof}
Next we show how Theorem \ref{thm:main} can be used to leverage bounds on the approximation error in order to obtain quantitative bounds on the generalization error.
\begin{corollary}[approximation implies generalization]\label{cor:appgen}
Assume Setting~\ref{set:NN}. Let $d\in \N$, $\varepsilon\in(0,1)$, 
$\mathbf{a}\in \bA_d$, $R\in [1,\infty)$,
and 
$g\in\cH:=\mathcal{N}_{\mathbf{a},R,\bd}^{u,v}$
with
$$
\big\|g-f^*_{d} \big\|_{\leb^2(\PX)}^2\le \varepsilon/2,
$$
let 
$h\in \cont((0,\infty)^5,\R)$ 
be given as in Theorem~\ref{thm:main}, and
let
$m\in \N$, $\varrho \in (0,1)$ with
\begin{equation*}
m \geq h\left(2\varepsilon^{-1},\ln(\varrho^{-1}),\ln(R\|\mathbf{a}\|_\infty),\sz(\ba),\cL(\ba)\right).
\end{equation*}
Then it holds that
\begin{equation*} 
\P\left[\big\|\widehat f_{d,m,\cH}-f^*_{d}\big\|_{\leb^2(\PX)}^2 \le \varepsilon\right] \geq 1-\varrho.
\end{equation*}
\end{corollary}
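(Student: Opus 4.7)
The plan is to combine the Bias-Variance Decomposition (Lemma~\ref{lem:biasvariance}) with the generalization bound (Theorem~\ref{thm:main}), using the given approximating function $g$ to control the approximation (bias) term and the sample-size hypothesis to control the generalization (variance) term.

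First I would invoke Lemma~\ref{lem:biasvariance}\ref{it:bv3} to write
\begin{equation*}
\big\|\widehat f_{d,m,\cH}-f^*_{d}\big\|_{L^2(\PX)}^2 = \Big(\mathcal{E}_d\big(\widehat f_{d,m,\cH}\big)-\mathcal{E}_d\big(f_{d,\cH}\big)\Big) + \big\|f_{d,\cH}-f^*_{d}\big\|_{L^2(\PX)}^2.
\end{equation*}
For the approximation term, since $g\in\cH$ by hypothesis, Lemma~\ref{lem:biasvariance}\ref{it:bv2} yields
\begin{equation*}
\big\|f_{d,\cH}-f^*_{d}\big\|_{L^2(\PX)}^2 \le \big\|g-f^*_{d}\big\|_{L^2(\PX)}^2 \le \varepsilon/2
\end{equation*}
deterministically, requiring no probabilistic argument.

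Next I would apply Theorem~\ref{thm:main} with the target accuracy $\varepsilon/2$ in place of $\varepsilon$. The sample-size assumption
\begin{equation*}
m\geq h\bigl(2\varepsilon^{-1},\ln(\varrho^{-1}),\ln(R\|\ba\|_\infty),\sz(\ba),\cL(\ba)\bigr)
\end{equation*}
is exactly the hypothesis of Theorem~\ref{thm:main} when the first argument of $h$ becomes $(\varepsilon/2)^{-1}=2\varepsilon^{-1}$, so the theorem delivers
\begin{equation*}
\P\Big[\mathcal{E}_d\big(\widehat f_{d,m,\cH}\big)-\mathcal{E}_d\big(f_{d,\cH}\big)\le \varepsilon/2\Big]\ge 1-\varrho.
\end{equation*}

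Finally I would combine the two estimates: on the event of probability at least $1-\varrho$ supplied by Theorem~\ref{thm:main}, the decomposition above is bounded by $\varepsilon/2+\varepsilon/2=\varepsilon$, which proves the corollary. There is no real obstacle here; the only point requiring mild care is the bookkeeping of the factor $2$ between the target $L^2$-error $\varepsilon$ and the generalization accuracy $\varepsilon/2$ fed into Theorem~\ref{thm:main}, which is precisely why the hypothesis on $m$ uses $h(2\varepsilon^{-1},\dots)$ rather than $h(\varepsilon^{-1},\dots)$.
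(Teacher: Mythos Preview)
Your proposal is correct and follows essentially the same approach as the paper's proof: invoke Lemma~\ref{lem:biasvariance} to bound the approximation error by $\varepsilon/2$ via the witness $g\in\cH$, then apply Theorem~\ref{thm:main} with $\varepsilon\leftarrow\varepsilon/2$ to control the generalization term with probability at least $1-\varrho$.
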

\begin{proof}[Proof of Corollary~\ref{cor:appgen}]
Lemma~\ref{lem:biasvariance}
ensures that
\begin{equation*}
 \big\|f_{d,\cH}-f^*_{d}\big\|_{\leb^2(\PX)}^2 \le \big\|g - f^*_{d} 
 \big\|_{\leb^2(\PX)}^2\le \varepsilon/2
\end{equation*}
and hence
$
\big\| \widehat f_{d,m,\cH}-f^*_{d}\big\|_{\leb^2(\PX)}^2\le  \cE_d(\widehat f_{d,m,\cH}) 
-\cE_d(f_{d,\cH})+\varepsilon/2.
$
Theorem \ref{thm:main} (with $\varepsilon \leftarrow \varepsilon/2$)
now directly implies the desired claim.
\end{proof}
The previous result in particular implies that whenever the family $(f^*_{d})_{d\in \N}$ from the statistical learning problem of Setting \ref{set:MathLearn} can be approximated by neural networks without the curse of dimensionality, then the number $m$ of required training samples to achieve a desired accuracy with high probability does not suffer from the curse of dimensionality either.
A compact version of this statement is given in the next result.
\begin{corollary}[approximation without curse implies generalization without curse]
\label{cor:appgen2} Assume Setting~\ref{set:NN}. 
Assume that there exists a polynomial $q:\R^2\to \R$ such that for every $d\in \N$, $\varepsilon \in (0,1)$
there exist $\mathbf{a}_{d,\varepsilon}\in \bA_d$, $R_{d,\varepsilon}\in [1,\infty)$, and $g_{d,\varepsilon}\in \cH_{d,\varepsilon}:= \mathcal{N}_{\mathbf{a}_{d,\varepsilon},R_{d,\varepsilon},\bd}^{u,v}$ with 
\begin{equation*}
\max\left\{\ln(R_{d,\varepsilon}),\sz(\mathbf{a}_{d,\varepsilon})\right\} \le q(d,\varepsilon^{-1}) \quad \text{and} \quad  \big\| g_{d,\varepsilon}-f^*_{d} \big\|_{\leb^2(\PX)}^2\le \varepsilon/2.
\end{equation*}
Then there exists a polynomial $s:\R^2\to \R$ such that for every $d,m\in\N$, $\varepsilon,\varrho\in (0,1)$ with
\begin{equation*}
m\geq s(d,\varepsilon^{-1})(1+\ln(\varrho^{-1}))
\end{equation*}
it holds that
\begin{equation*}
\P\left[\big\|\widehat f_{d,m,\cH_{d,\varepsilon}}-f^*_{d}\big\|_{\leb^2(\PX)}^2 \le \varepsilon\right] \geq 1-\varrho.
\end{equation*}
\end{corollary}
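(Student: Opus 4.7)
The plan is to apply Corollary~\ref{cor:appgen} with the architecture, parameter bound and approximant supplied by the hypothesis, and then simply to verify that the resulting threshold on the number of samples $m$ is bounded by a polynomial in $d$ and $\varepsilon^{-1}$ times $1+\ln(\varrho^{-1})$.

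First, fix $d\in\N$ and $\varepsilon\in(0,1)$ and set $\ba:=\ba_{d,\varepsilon}$, $R:=R_{d,\varepsilon}$, $g:=g_{d,\varepsilon}$, $\cH_{d,\varepsilon}:=\cN_{\ba,R,\bd}^{u,v}$. By hypothesis, $\|g-f_d^*\|_{L^2(\PX)}^2\le \varepsilon/2$, so Corollary~\ref{cor:appgen} applies and yields the desired probabilistic estimate provided
\[
m\ge h\bigl(2\varepsilon^{-1},\ln(\varrho^{-1}),\ln(R\|\ba\|_\infty),\sz(\ba),\cL(\ba)\bigr).
\]
So the task reduces to bounding the right-hand side by $s(d,\varepsilon^{-1})(1+\ln(\varrho^{-1}))$ for a single polynomial $s$ that does not depend on $d$, $\varepsilon$, or $\varrho$.

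Second, I would record two elementary inequalities: $\|\ba\|_\infty\le \sz(\ba)$ and $\cL(\ba)\le \sz(\ba)$. Both follow directly from $\sz(\ba)=\sum_{l=1}^{L}a_la_{l-1}+a_l$ together with $a_l\ge 1$ for every layer; indeed, for each $l\in\{0,\dots,L\}$ one has $a_l\le a_la_{l-1}\le \sz(\ba)$ (taking $l=0$ via the first-layer summand $a_1a_0\ge a_0$), and $\sz(\ba)\ge \sum_{l=1}^{L}1=\cL(\ba)$. Combined with the hypothesis, this gives
\[
\max\{\ln(R),\ \ln(\|\ba\|_\infty),\ \sz(\ba),\ \cL(\ba)\}\le q(d,\varepsilon^{-1})+\ln\bigl(q(d,\varepsilon^{-1})\bigr),
\]
so each of the arguments $x_3,x_4,x_5$ of $h$ is bounded by a polynomial $\tilde q(d,\varepsilon^{-1})$.

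Third, I substitute these bounds into the explicit form of $h$. The prefactor $128\bd^4 x_1^2$ equals $512\bd^4\varepsilon^{-2}$, which is polynomial in $\varepsilon^{-1}$. Inside the bracket, the term $x_2=\ln(\varrho^{-1})$ contributes the factor $\ln(\varrho^{-1})$ linearly, the term $x_3x_4x_5$ is at most $\tilde q(d,\varepsilon^{-1})^3$, and the logarithmic term $x_4\ln(128\bd\max\{1,|u|,|v|\}x_1 x_5^2)$ is dominated by a polynomial in $d,\varepsilon^{-1}$ as well (since $\ln$ of a polynomial is polynomially bounded). Collecting everything,
\[
h\bigl(2\varepsilon^{-1},\ln(\varrho^{-1}),\ldots\bigr)\le s(d,\varepsilon^{-1})\bigl(1+\ln(\varrho^{-1})\bigr)
\]
for a polynomial $s$ depending only on $\bd$, $u$, $v$, and $q$, which proves the claim.

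The proof is essentially bookkeeping: Corollary~\ref{cor:appgen} already does the probabilistic work, and the only thing to check is that polynomial control of $\ln(R_{d,\varepsilon})$ and $\sz(\ba_{d,\varepsilon})$ propagates through $h$. The mildly non-obvious step is noticing that $\|\ba\|_\infty$ and $\cL(\ba)$ are both controlled by $\sz(\ba)$, so they do not need to be bounded separately in the hypothesis; aside from this there is no genuine obstacle.
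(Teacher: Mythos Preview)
Your proof is correct and follows essentially the same approach as the paper: apply Corollary~\ref{cor:appgen}, observe that $\|\ba\|_\infty$ and $\cL(\ba)$ are each dominated by $\sz(\ba)$, and then verify that the explicit threshold $h(2\varepsilon^{-1},\ln(\varrho^{-1}),\dots)$ is bounded by a polynomial in $(d,\varepsilon^{-1})$ times $1+\ln(\varrho^{-1})$. The paper is slightly more explicit in that it records the clean inequality
\[
h(x)\le 128\bd^4x_1^2\Big[1+x_4\big(x_1+x_3x_5+2x_5+\ln(128\bd\max\{1,|u|,|v|\})-3\big)\Big](1+x_2),
\]
obtained via $\ln t\le t-1$, which makes the factorization of $(1+x_2)$ immediate; but this is only a cosmetic difference from your argument.
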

\begin{proof}[Proof of Corollary~\ref{cor:appgen2}]
Observe that for every $d\in\N$, $\varepsilon\in(0,1)$ it holds that
\begin{equation*}
\max\left\{\ln(\|\mathbf{a}_{d,\varepsilon}\|_\infty),\cL(\mathbf{a}_{d,\varepsilon})\right\}\le \sz(\mathbf{a}_{d,\varepsilon}) \le q(d,\varepsilon^{-1})
\end{equation*}
and that the function $h\in \cont((0,\infty)^5,\R)$ from Theorem~\ref{thm:main} satisfies for every $x\in(0,\infty)^5$ that
\begin{equation*}
h(x)\le 128\bd^4x_1^2\Big[1+x_4\Big(x_1+x_3x_5+2x_5+\ln\big(128\bd\max\{1,|u|,|v|\}\big)-3\Big)\Big](1+x_2).
\end{equation*}
Thus, Corollary~\ref{cor:appgen2} is a direct consequence of Corollary \ref{cor:appgen}.
\end{proof}

\section{Applications for the numerical approximation of high-dimensional PDEs}\label{sec:pde}
In the present section we apply the general results of Section \ref{sec:mainresult} to the numerical solution of high-dimensional Kolmogorov equations. 
\subsection{Kolmogorov equation as learning problem}\label{sec:KolmoLearn}
The following setting describes suitable Kolmogorov equations and the data for the corresponding statistical learning problems.
\begin{setting}[Kolmogorov equations]\label{set:PDE}Assume Setting~\ref{set:NN}. 
Let $K\in (0,\infty)$, for every $d\in \N$ let $\mu_d\in \cont(\R^d,\R^d)$ (\emph{drift coefficient}) and $\sigma_d\in \cont(\R^d,\R^{d\times d})$ (\emph{diffusion coefficient}) be affine functions satisfying for every $x\in \R^d$ that
$$
\|\sigma_d(x)\|_\eu+\|\mu_d(x)\|_{\eu}\le K(1 + \|x\|_{\eu}),
$$ 
and let $\varphi_d\in \cont(\R^d,[-\bd,\bd])$ (\emph{initial value}). Assume that $(\varphi_d)_{d\in\N}$ can be approximated by neural networks in the following sense: Let $\zeta \in [1,\infty)$ and 
$\beta,\gamma,\kappa,\lambda, \nu \in [0,\infty)$, and
let 
$$\mathbf{b}_{d,\varepsilon}\in \bA_d, \quad \peta_{d,\varepsilon} \in \p_{\mathbf{b}_{d,\varepsilon}}, \quad d\in\N,
\ \varepsilon\in(0,1), \quad \text{(\emph{neural network approximation of $\varphi_d$})}$$
such that for every 
$d\in \N$, $\varepsilon\in (0,1)$, $x\in\R^d$
it holds that\footnote{Due to the boundedness assumption on $\varphi_d$ one can obtain the desired estimates in item~\ref{it:set1} and~\ref{it:set2} by adapting known neural network approximation results; see~\cite{bernertowards} and Appendix~\ref{app:nn}.}
\begin{enumerate}[label=(\roman*)]
\item \label{it:set1}
$
|\varphi_d(x)-\F(\peta_{d,\varepsilon})(x)|\le \varepsilon (1+\|x\|_{\eu}^{\nu}),
$
\item \label{it:set2}
$|\F(\peta_{d,\varepsilon})(x)|\le D$,
\item
$\|\peta_{d,\varepsilon}\|_\infty \le \zeta d^{\beta}\varepsilon^{-\kappa},
$ and
\item
$
\sz(\mathbf{b}_{d,\varepsilon})\le \zeta d^{\gamma}\varepsilon^{-\lambda}.
$
\end{enumerate}
Let $T\in (0,\infty)$, and for every $d\in \N$ let $F_d\in \cont([0,T]\times \R^d,\R)$ be the unique\footnote{For a proof see~\cite[Proposition 3.4(i)]{grohs2018approx}.
} function satisfying the following:
\begin{enumerate}[label=(\roman*)]
\item $F_d(0,x)=\varphi(x)$ for every $x\in\R^d$;
\item $F_d$ is at most polynomially growing, i.e., there exists $\vartheta\in(0,\infty)$ such that for every $x\in\R^d$ it holds that
$ \max_{t\in[0,T]} F_d(t,x) \le \vartheta \left(1+\|x\|_2^\vartheta\right)$; and
\item $F_d$ is a viscosity \emph{solution of the $d$-dimensional Kolmogorov equation}
\begin{equation*}
\tfrac{\partial F_d}{\partial t}(t,x)=\tfrac12\mathrm{Trace}\big(\sigma_d(x)[\sigma_d(x)]^* (\mathrm{Hess}_x F_d)(t,x)\big) 
+  \mu_d(x) \cdot (\nabla_x F_d)(t,x) 
\end{equation*}
for every $(t,x)\in (0,T)\times \R^d$.
\end{enumerate}
Let the probability space
$(\Omega,\G,\P)$
be equipped with a filtration 
$(\G_t)_{t\in [0,T]}$ which fulfills the usual conditions. For every $d\in\N$ 
let
$(B^d_t)_{t\in[0,T]}\colon [0,T]\times\Omega\to\R^d$
be a
$d$-dimensional
$(\G_t)$-Brownian
motion, and for every $\G_0$-measurable random variable $\chi\colon\Omega\to\R^d$ denote by
$$(S_t^\chi)_{t\in[0,T]}\colon [0,T]\times\Omega\to\R^d \quad \text{(\emph{SDE solution process with initial value $\chi$})}$$ the unique $(\G_t)$-adapted stochastic process\footnote{The solution process is unique up to indistinguishability; see, for instance,~\cite[Theorem 6.2.2]{arnold1974}.} with continuous sample paths satisfying the SDE
\begin{equation*} 
dS_t^{\chi}=\sigma_d(S_t^{\chi})dB_t^d + \mu_d(S_t^{\chi})dt\quad \text{and}\quad S^{{\chi}}_0={\chi}
\end{equation*}
$\P$-a.s.\@ for every $t\in [0,T]$.
For every $d\in\N$ let the \emph{input data}
$X_d\colon\Omega\to [u,v]^d$ 
be $\G_0$-measurable and uniformly distributed on 
$[u,v]^d$ and define the \emph{label} by 
$Y_d := \varphi_d(S^{X_d}_T)$.
\end{setting}
The next result shows that computing the end value $[u,v]^d\ni x\mapsto F_d(T,x)$ of the solution to the Kolmogorov equation  can be restated as a learning problem.
\begin{lemma}[Kolmogorov equation as learning problem]
\label{lem:kolm_reg}
Assume Setting~\ref{set:PDE} and let $d\in\N$. Then for a.e.\@ $x\in [u,v]^d$ it holds that
$$
F_d(T,x) = f^*_{d}(x).
$$
\end{lemma}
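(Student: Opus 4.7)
The plan is to reduce the claim to the Feynman--Kac formula \eqref{eq:FC}, which identifies $F_d(T,\xi)$ with $\E[\varphi_d(S_T^\xi)]$ for deterministic initial values $\xi\in\R^d$, and then use the $\G_0$-measurability of $X_d$ together with independence from the Brownian motion to pass from a deterministic initial value $\xi$ to the random initial value $X_d$ inside a conditional expectation.

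First I would verify that the hypotheses of Setting~\ref{set:PDE} place us in the regime where a Feynman--Kac representation is available. The coefficients $\mu_d$ and $\sigma_d$ are affine and satisfy the linear growth bound, so the SDE for $S^\xi$ admits a unique $(\G_t)$-adapted continuous solution with finite moments of all orders; together with the fact that $F_d$ is an at most polynomially growing viscosity solution of the Kolmogorov PDE with initial datum $\varphi_d\in C(\R^d,[-\bd,\bd])$ (and hence bounded), standard Feynman--Kac results for affine-coefficient linear Kolmogorov equations (in the form used in \cite{grohs2018approx}) give the pointwise identity $F_d(T,\xi)=\E[\varphi_d(S_T^\xi)]$ for every $\xi\in\R^d$. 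I would simply cite this, since the paper has already invoked this representation in the introduction.

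Next I would compute $f^*_d(x)=\E[Y_d\mid X_d=x]=\E[\varphi_d(S_T^{X_d})\mid X_d=x]$ by freezing the initial condition. Since $X_d$ is $\G_0$-measurable and $B^d$ is a $(\G_t)$-Brownian motion starting at $0$, the process $B^d$ is independent of $\G_0$ and hence of $X_d$. The solution $S_T^{X_d}$ can be written as $\Phi(X_d,B^d)$ for a Borel measurable functional $\Phi$ obtained from the strong (and in the affine case, explicit variation-of-constants) solution of the SDE. By the standard factorization lemma for conditional expectation with an independent random element, one has for $\P_{X_d}$-a.e.\ $x\in[u,v]^d$
\begin{equation*}
\E\bigl[\varphi_d(S_T^{X_d})\bigm|X_d=x\bigr]
=\E\bigl[\varphi_d(\Phi(x,B^d))\bigr]
=\E\bigl[\varphi_d(S_T^x)\bigr].
\end{equation*}

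Combining the two ingredients, $f^*_d(x)=\E[\varphi_d(S_T^x)]=F_d(T,x)$ for $\P_{X_d}$-a.e.\ $x\in[u,v]^d$, which is exactly the claim since $\P_{X_d}$ is the uniform distribution on $[u,v]^d$. The only step requiring genuine care is the Feynman--Kac identification itself, since it depends on uniqueness of the at most polynomially growing viscosity solution and on the integrability of $\varphi_d(S_T^\xi)$ uniformly in $\xi$ on compacts; both are provided by boundedness of $\varphi_d$ and the linear growth of the coefficients, so the main obstacle is essentially bookkeeping and a clean invocation of the external Feynman--Kac statement rather than new analysis.
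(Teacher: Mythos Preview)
Your proof is correct and reaches the same conclusion, but the route differs from the paper's. Both start from the Feynman--Kac identity $F_d(T,x)=\E[\varphi_d(S_T^x)]$ (cited from \cite{grohs2018approx}). From there, you argue via independence: since $X_d$ is $\G_0$-measurable and the Brownian motion $B^d$ is independent of $\G_0$, and since the affine SDE has a strong solution $S_T^{X_d}=\Phi(X_d,B^d)$ for a measurable functional $\Phi$, the freezing lemma directly yields $\E[\varphi_d(S_T^{X_d})\mid X_d=x]=\E[\varphi_d(S_T^x)]$. The paper instead verifies the defining property of conditional expectation on Borel sets: it shows $\E[\1_A(X_d)\varphi_d(S_T^{X_d})]=\int_A \E[\varphi_d(S_T^x)]\,d\P_{X_d}(x)$ by mollifying $\1_A$ to a smooth $\1_{A,\varepsilon}$, invoking \cite[Lemma~2.6(v)]{beckbecker2018} for the smooth case, and passing to the limit by dominated convergence. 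Your argument is shorter and avoids the mollification detour, at the price of invoking the existence of a measurable strong-solution map and the fact that $S^{X_d}$ agrees with $\Phi(X_d,B^d)$ (which follows from pathwise uniqueness, or explicitly from the affine variation-of-constants representation used later in the paper as Lemma~\ref{lem:affine}); the paper's argument trades that for an external black-box identity valid for continuous test functions.
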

The result is based on work from~\cite{beckbecker2018} and the following formal calculation:
\begin{equation*}
F_d(T,x)=\E\big[\varphi_d\left(S_T^x\right)\big]=\E\big[\varphi_d(S^{X_d}_T)\big|X_d=x\big]=\E\left[Y_d\big|X_d=x\right]=f^*_{d}(x)
\end{equation*}
for a.e.\@ $x\in [u,v]^d$; see Appendix~\ref{app:learn} for a rigorous proof. 
\subsection{Neural network generalization results for solutions of Kolmogorov equations}\label{sec:pdegen}
We first show that the end value $[u,v]^d\ni x\mapsto F_d(T,x)$ of the solution to the Kolmogorov equation can be approximated by hypothesis classes consisting of clipped ReLU networks.
\begin{theorem}[neural network regularity result for Kolmogorov equations]\label{thm:mainapprox}
Assume Setting~\ref{set:PDE}.
Then there exist $C,c\in (0,\infty)$ such that the following holds: 
For every $d\in \N$, $\varepsilon\in (0,1)$ there exist $\ba\in \bA_d$, $R\in[1,\infty)$, and 
$g\in \mathcal{N}_{\mathbf{a},R,\bd}^{u,v}$ such that it holds that
\begin{enumerate}[label=(\roman*)]
\item
$
\tfrac{1}{(v-u)^d}\big\|g-F_d(T,{\cdot})\big\|_{\leb^2([u,v]^d)}^2\le 
\varepsilon,
$
\item 
$
\sz(\ba)\le Cd^{\nu\lambda/2+\gamma }\varepsilon^{-\lambda/2-2},
$
\item \label{it:mainapprox2}
$R\le Cd^{(\nu\kappa+3)/2+\beta}\varepsilon^{-\kappa/2-1},$
\item
$\cL(\ba) = \cL(\mathbf{b}_{d,cd^{-\nu/2} \varepsilon^{1/2}})$, and
\item
$ \| \ba \|_\infty \le   C\varepsilon^{-1}\|\mathbf{b}_{d,cd^{-\nu/2} \varepsilon^{1/2}}\|_\infty.$
\end{enumerate}
\end{theorem}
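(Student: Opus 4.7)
My plan is to turn the Feynman--Kac representation $F_d(T,x)=\E[\varphi_d(S_T^x)]$ into a neural network via Monte-Carlo sampling. Three ingredients will enter. First, because $\mu_d$ and $\sigma_d$ are affine, variation-of-parameters for linear SDEs provides a decomposition $S_T^x=\Phi_T x+\Psi_T$ with $\Phi_T\colon\Omega\to\R^{d\times d}$ and $\Psi_T\colon\Omega\to\R^d$ random and independent of $x$; standard SDE moment estimates yield polynomial-in-$d$ bounds on $\E[\|\Phi_T\|_\infty^q+\|\Psi_T\|_\infty^q]$. Second, setting $\delta:=cd^{-\tau/2}\varepsilon^{1/2}$ and $\widetilde\varphi:=\F(\peta_{d,\delta})$, the hypothesis on $(\varphi_d)_{d\in\N}$ supplies $|\varphi_d(y)-\widetilde\varphi(y)|\le \mathbf{c}d^\alpha\delta(1+\|y\|_2^\nu)$, together with the polynomial bounds on $\sz(\mathbf{b}_{d,\delta})$ and $\|\peta_{d,\delta}\|_\infty$. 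Third, Appendix~\ref{app:nn} realises the clipping $\cC_\bd$ as a small ReLU network.

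Drawing i.i.d.\ copies $(\Phi_T^{(i)},\Psi_T^{(i)})_{i=1}^m$ of $(\Phi_T,\Psi_T)$ on an auxiliary probability space, I introduce the random candidate
\begin{equation*}
g_\omega(x):=\cC_\bd\Bigl(\tfrac{1}{m}\sum_{i=1}^m \widetilde\varphi\bigl(\Phi_T^{(i)}(\omega)\,x+\Psi_T^{(i)}(\omega)\bigr)\Bigr),\qquad x\in[u,v]^d.
\end{equation*}
A bias-variance split of $\E_\omega\bigl[\tfrac{1}{(v-u)^d}\|g_\omega-F_d(T,\cdot)\|_{L^2}^2\bigr]$ yields (a) a bias term of order $d^{2\alpha+\nu}\delta^2=d^{\tau}\delta^2$, using $\tau=\nu+2\alpha$, the approximation bound on $\widetilde\varphi$, the above moment estimate for $S_T^x$, and $\|x\|_2^{2\nu}\le d^\nu M^{2\nu}$ on $[u,v]^d$ with $M:=\max\{1,|u|,|v|\}$, and this is $\le\varepsilon/4$ by the choice of $\delta$; and (b) a Monte-Carlo variance term of order $\bd^2/m$, exploiting that $|\widetilde\varphi|\le\bd+\cO(\varepsilon^{1/2})$ on the relevant range, which becomes $\le\varepsilon/4$ at the stated scale of $m$. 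Markov's inequality then furnishes at least one realisation $\omega^\star$ for which the rescaled squared $L^2$-error of $g_{\omega^\star}$ is at most $\varepsilon$.

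The next step realises $g_{\omega^\star}$ as a clipped ReLU network. Each affine map $x\mapsto \Phi_T^{(i)}(\omega^\star)x+\Psi_T^{(i)}(\omega^\star)$ is absorbed into the first affine layer of $\F(\peta_{d,\delta})$ at no extra depth, the $m$ resulting subnetworks are placed in parallel as a block-diagonal network, a final scalar averaging linear layer is appended, and the output is composed with the ReLU realisation of $\cC_\bd$. The outcome is a clipped ReLU network of depth $\cL(\mathbf{b}_{d,\delta})$, of width at most $m\,\|\mathbf{b}_{d,\delta}\|_\infty$, and with at most $Cm^2\,\sz(\mathbf{b}_{d,\delta})$ parameters (from the block-diagonal counting). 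Choosing $m\asymp d^{\tau}\varepsilon^{-1}$ and $\delta=cd^{-\tau/2}\varepsilon^{1/2}$ and inserting the hypothesised polynomial bounds on $\sz(\mathbf{b}_{d,\delta})$ and $\|\mathbf{b}_{d,\delta}\|_\infty$ produces assertions (ii), (iv), and (v).

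The main obstacle lies in assertion (iii). The composed first-layer weights $W_1\Phi_T^{(i)}(\omega)$ have $\infty$-norm at most $d\,\|\peta_{d,\delta}\|_\infty\max_i\|\Phi_T^{(i)}(\omega)\|_\infty$, so closing the argument requires simultaneously selecting $\omega^\star$ for which both the $L^2$-error is small \emph{and} $\max_i\bigl(\|\Phi_T^{(i)}(\omega^\star)\|_\infty+\|\Psi_T^{(i)}(\omega^\star)\|_\infty\bigr)\le T_{d,\varepsilon}$. I plan to use the probabilistic method: Markov applied to the $L^2$-error, combined with a union bound over $i\in\{1,\dots,m\}$ and a sufficiently high moment estimate on $(\Phi_T,\Psi_T)$, shows that both events occur together with positive probability provided $T_{d,\varepsilon}$ is polynomial in $d$ and $\varepsilon^{-1/2}$. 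Composing $T_{d,\varepsilon}$ with the hypothesised bound $\|\peta_{d,\delta}\|_\infty\le\mathbf{c}c^{-\kappa}d^{\beta+\tau\kappa/2}\varepsilon^{-\kappa/2}$ and the factor $d$ from the $\infty$-norm matrix product yields precisely $R\le Cd^{\tau(\kappa/2+1)+\beta+3/2}\varepsilon^{-\kappa/2-1}$. Keeping every constant polynomial in $d$ and $\varepsilon^{-1}$ throughout these estimates is the delicate part of the argument.
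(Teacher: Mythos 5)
Your proposal is correct and follows essentially the same route as the paper: the affine representation $S_T^x=\mathfrak{M}_d x+\mathfrak{N}_d$, a bias--variance split of the Monte-Carlo approximant built from $\F(\peta_{d,\delta})$ with $\delta\asymp d^{-\tau/2}\varepsilon^{1/2}$ and $n\asymp d^\tau\varepsilon^{-1}$ samples, a probabilistic selection of a good outcome $\omega^\star$, and the block-diagonal/parallelization construction with the clipping network appended. The only notable difference is in the simultaneous selection step: where you invoke higher moments and a union bound, the paper needs only first moments, bounding $\E[\max_j(\|\mathfrak{M}^{(j)}_d\|_2+\|\mathfrak{N}^{(j)}_d\|_2)]$ by the sum over $j$ and combining the error and the parameter-magnitude constraints into a single nonnegative random variable $G$ with $\E[|G|]\le 1$, so that some outcome satisfies both at once — a slightly leaner argument that also delivers the exact $d^{3/2}$ factor in assertion (iii) via the Frobenius-norm bookkeeping of Lemma~\ref{lem:affcomb}.
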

Except for property~\ref{it:mainapprox2} a similar result was shown in \cite[Corollary 3.13]{grohs2018approx}. We present the proof in Appendix~\ref{app:approx} and briefly sketch the idea in the following. First we observe that in our case of affine coefficients 
$\sigma_d$ and $\mu_d$ 
there exist random variables
$\mathfrak{M}$
and 
$\mathfrak{N}$
such that for all 
$x\in \R^d$ 
it holds 
$\P$-a.s.\@ 
that
$
S^{x}_T=\mathfrak{M}x+\mathfrak{N}
$;
see Lemma~\ref{lem:affine}.
Let $((\mathfrak{M}^{(j)},\mathfrak{N}^{(j)}))_{j\in\N}$
be i.i.d.\@ samples with 
$(\mathfrak{M}^{(1)},\mathfrak{N}^{(1)})\sim (\mathfrak{M},\mathfrak{N})$.
Then for fixed $x\in\R^d$ the mean squared error 
\begin{equation*}
\E\Big[\Big(F_d(T,x)-\tfrac{1}{n}\sum_{j=1}^{n}\F(\peta_{d,\delta})\big(\mathfrak{M}^{(j)}x+\mathfrak{N}^{(j)}\big)\Big)^2\Big]
\end{equation*}
can be decomposed into the sum of the squared bias and the variance, i.e.,
\begin{equation*}
    \underbrace{\vphantom{\sum_{j=1}^{n}}\E\Big[\varphi_d(S^{x}_T)-\F(\peta_{d,\delta})(S^{x}_T)\Big]^2}_{\cO(\delta^2)} +
    \underbrace{\mathbbm{V}\Big[\tfrac{1}{n}\sum_{j=1}^{n}\F(\peta_{d,\delta})\big(\mathfrak{M}^{(j)}x+\mathfrak{N}^{(j)}\big) \Big]}_{\cO(n^{-1})},
\end{equation*}
where we used the Feynman--Kac formula, our assumptions, and properties of Monte Carlo sampling. 
With more effort one can prove analogous estimates in the 
$\leb^2(\PX)$-norm, and our setting implies that $\PX=\tfrac{1}{(v-u)^d} \lambda_{[u,v]^d}$, where $\lambda_{[u,v]^d}$ denotes the Lebesgue measure on $[u,v]^d$. This suggests that, given $\varepsilon\in(0,1)$, for sufficient large $n\in \N$ and small $\delta\in(0,1)$ there exists an outcome 
$\omega\in\Omega$
such that with
$ M^{(j)}:=\mathfrak{M}^{(j)}(\omega)$ and 
$N^{(j)}:=\mathfrak{N}^{(j)}(\omega)$
it holds that
\begin{equation*}
\tfrac{1}{(v-u)^d} \int_{[u,v]^d}\Big( F_d(T,x)-\tfrac{1}{n}\sum_{j=1}^{n}\F(\peta_{d,\delta})\big(M^{(j)}x+N^{(j)}\big)\Big)^2 \, dx \le \varepsilon.
\end{equation*}
In Lemma~\ref{lem:affcomb} we specify a network architecture $\ba\in\bA_d$ and a parametrization 
$\ptheta\in\p_\ba$ 
such that for every $x\in\R^d$ it holds that 
\begin{equation*}
\F(\ptheta)(x) = \tfrac{1}{n}\sum_{j=1}^{n}\F(\peta_{d,\delta})\big(M^{(j)}x+N^{(j)}\big)
\end{equation*}
and we bound the parameter magnitudes of $\ptheta$ with the help of Lemma~\ref{lem:affine}. 

Observe that the approximation result in Theorem~\ref{thm:mainapprox} does not underlie the curse of dimensionality, and by Corollary~\ref{cor:appgen} we can establish a generalization result that is free of the curse of dimensionality.
\begin{theorem}[neural network generalization result for Kolmogorov equations] \label{thm:kolm_nngen}
Assume Setting~\ref{set:PDE} and let $h\in \cont((0,\infty)^5,\R)$ 
be given as in Theorem~\ref{thm:main}.
Then there
exist $C,c\in (0,\infty)$ such that the following holds: 
For every 
$d, m\in \N$, $\varepsilon,\varrho\in (0,1)$ with
\begin{equation*}
m\geq h\left(2\varepsilon^{-1},\ln(\varrho^{-1}),\ln\left( R \| \ba \|_\infty\right), \sz(\ba),\cL(\ba))\right)
\end{equation*}
there exist $\ba\in \bA_d$ and
$R\in[1,\infty)$ such that it holds that 
\begin{enumerate}[label=(\roman*)]
\item $ 
\P\left[\tfrac{1}{(v-u)^d}\big\|\widehat f_{d,m,\cH}-F_d(T,\cdot)\big\|_{\leb^2([u,v]^d)}^2 \le \varepsilon\right] \geq 1-\varrho
$,
\item 
$
\sz(\ba)\le Cd^{\nu\lambda/2+\gamma }\varepsilon^{-\lambda/2-2},
$
\item 
$R \le  Cd^{(\nu\kappa+3)/2+\beta}\varepsilon^{-\kappa/2-1},$
\item 
$\cL(\ba) = \cL(\mathbf{b}_{d,cd^{-\nu/2} \varepsilon^{1/2}})$, and
\item 
$\| \ba \|_\infty \le  C\varepsilon^{-1}\|\mathbf{b}_{d,cd^{-\nu/2} \varepsilon^{1/2}}\|_\infty$,
\end{enumerate}
where $\cH=\mathcal{N}_{\ba,R,\bd}^{u,v}.$ 
\end{theorem}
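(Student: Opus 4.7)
The plan is to directly chain the three main ingredients already assembled in the paper: the identification of $F_d(T,\cdot)$ with the regression function $f^*_d$ in the statistical learning problem (Lemma~\ref{lem:kolm_reg}), the neural-network approximation result for $F_d(T,\cdot)$ without curse of dimensionality (Theorem~\ref{thm:mainapprox}), and the abstract ``approximation implies generalization'' statement (Corollary~\ref{cor:appgen}). No new analytic content is needed; the theorem is essentially the packaged synthesis of these three results via the Bias--Variance Decomposition.

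Concretely, first I would invoke Theorem~\ref{thm:mainapprox} with target accuracy $\varepsilon/2$ in place of $\varepsilon$, obtaining an architecture $\ba \in \bA_d$, a parameter bound $R \in [1,\infty)$ and a clipped network $g \in \cH := \mathcal{N}_{\ba, R, \bd}^{u,v}$ satisfying $\tfrac{1}{(v-u)^d}\|g - F_d(T,\cdot)\|_{L^2([u,v]^d)}^2 \le \varepsilon/2$, together with the bounds on $\sz(\ba)$, $R$, $\cL(\ba)$, $\|\ba\|_\infty$ needed for items~(ii)--(v). These bounds match those of Theorem~\ref{thm:mainapprox} with $\varepsilon$ replaced by $\varepsilon/2$; each exponent of $\varepsilon^{-1}$ that appears is bounded in terms of $\kappa$ and $\lambda$, so the extra factors $2^{p}$ introduced can be absorbed into $C$ (and similarly $c$ is replaced by $c/\sqrt{2}$ in the subscript $cd^{-\tau/2}\varepsilon^{1/2}$). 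Second, I would exploit that $X_d \sim \mathcal{U}([u,v]^d)$ implies $\P_{X_d} = (v-u)^{-d} \lambda_{[u,v]^d}$, so for every Borel measurable $h\colon [u,v]^d \to \R$ we have $\|h\|_{L^2(\P_{X_d})}^2 = (v-u)^{-d}\|h\|_{L^2([u,v]^d)}^2$. Combining this with the identification $F_d(T,\cdot) = f^*_d$ a.e.\ from Lemma~\ref{lem:kolm_reg} yields $\|g - f^*_d\|_{L^2(\P_{X_d})}^2 \le \varepsilon/2$. Third, Corollary~\ref{cor:appgen} applies directly: the stated lower bound on $m$ is exactly its hypothesis with the chosen $\ba$ and $R$, and therefore $\P[\|\widehat f_{d,m,\cH} - f^*_d\|_{L^2(\P_{X_d})}^2 \le \varepsilon] \ge 1 - \varrho$. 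Translating back to the Lebesgue-normalized $L^2$-norm and applying Lemma~\ref{lem:kolm_reg} once more delivers item~(i); items (ii)--(v) are then inherited from Theorem~\ref{thm:mainapprox}.

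The ``hard parts'' have already been done elsewhere: the Monte-Carlo-based network construction for $F_d(T,\cdot)$ in the proof of Theorem~\ref{thm:mainapprox}, and the covering-number plus Hoeffding generalization estimate in Theorem~\ref{thm:bound_sample} and Proposition~\ref{prop:cov_nn}. The only genuine bookkeeping subtlety is a logical one, namely that the lower bound on $m$ in the hypothesis involves $\sz(\ba), \cL(\ba), \|\ba\|_\infty, R$, which are themselves produced by the theorem's conclusion; the statement must therefore be read in the order ``given $d, \varepsilon, \varrho$, first choose $\ba$ and $R$ via Theorem~\ref{thm:mainapprox}, then impose the condition on $m$,'' which is precisely what the application of Corollary~\ref{cor:appgen} supplies. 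Verifying consistency of the $\varepsilon \mapsto \varepsilon/2$ rescaling between Theorems~\ref{thm:mainapprox} and~\ref{thm:main} (through $h$) is then a short arithmetic check, and no additional probabilistic or analytic arguments are required.
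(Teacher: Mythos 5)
Your proposal is correct and is essentially identical to the paper's own proof, which simply invokes Theorem~\ref{thm:mainapprox} with $\varepsilon \leftarrow \varepsilon/2$ together with Corollary~\ref{cor:appgen} (the identification $f^*_d = F_d(T,\cdot)$ via Lemma~\ref{lem:kolm_reg} and the relation $\P_{X_d} = (v-u)^{-d}\lambda_{[u,v]^d}$ being implicit in Setting~\ref{set:PDE}). Your bookkeeping remarks about absorbing the $\varepsilon/2$ rescaling into $C$ and $c$, and about the order of quantifiers for $\ba$, $R$, and $m$, are accurate and match the intended reading.
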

\begin{proof}[Proof of Theorem~\ref{thm:kolm_nngen}]
This is a direct consequence of Theorem \ref{thm:mainapprox} (with $\varepsilon \leftarrow \varepsilon/2$) and Corollary~\ref{cor:appgen}.
\end{proof}
We can also reformulate this in a more compact form.
\begin{corollary}[ERM for Kolmogorov equations without curse]\label{cor:kolm_main}
Assume Setting~\ref{set:PDE}.
Then there
exists a polynomial $p\colon\R^2\to\R$ such that the following holds:
For every $d,m\in \N$, $\varepsilon,\varrho\in (0,1)$ with
\begin{equation*}
    m\geq p(d,\varepsilon^{-1})(1+\ln(\varrho^{-1}))
\end{equation*}
there exist $\ba\in \bA_d$ and $R\in[1,\infty)$ such that it holds that 
\begin{enumerate}[label=(\roman*)]
\item $\P\left[\tfrac{1}{(v-u)^d}\big\|\widehat f_{d,m,\cH}-F_d(T,\cdot)\big\|_{\leb^2([u,v]^d)}^2 \le \varepsilon\right] \geq 1-\varrho$ and
\item $\max\{R,\sz(\ba)\} \le p(d,\varepsilon^{-1})$,
\end{enumerate}
where $ 
\cH=\mathcal{N}_{\ba,R,\bd}^{u,v}
$.
\end{corollary}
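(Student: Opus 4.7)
The plan is to show that Corollary~\ref{cor:kolm_main} is essentially a packaging of Theorem~\ref{thm:kolm_nngen}, along the same lines as how Corollary~\ref{cor:appgen2} packages Theorem~\ref{thm:main}. The strategy has three ingredients: invoke Theorem~\ref{thm:kolm_nngen} to produce $\ba$ and $R$; verify that each of the five quantities fed into the sample-complexity function $h$ is bounded by a polynomial in $d$ and $\varepsilon^{-1}$; and then use the explicit polynomial-in-its-arguments upper bound on $h$ to collapse the product into a single polynomial $p$.

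First, I would apply Theorem~\ref{thm:kolm_nngen} to obtain the architecture $\ba\in\bA_d$ and bound $R\in[1,\infty)$ with $\sz(\ba)\le C d^{\tau(\lambda/2+2)+\gamma}\varepsilon^{-\lambda/2-2}$ and $R\le C d^{\tau(\kappa/2+1)+\beta+3/2}\varepsilon^{-\kappa/2-1}$, which immediately yields item~(ii). The nontrivial part is ensuring that the sample-complexity hypothesis of Theorem~\ref{thm:kolm_nngen}, namely $m\geq h\left(2\varepsilon^{-1},\ln(\varrho^{-1}),\ln(R\|\ba\|_\infty),\sz(\ba),\cL(\ba)\right)$, follows from the much cleaner hypothesis $m\ge p(d,\varepsilon^{-1})(1+\ln(\varrho^{-1}))$.

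To do so, I would bound the remaining architectural quantities. Using the trivial inequalities $\cL(\mathbf{b}_{d,\delta})\le \sz(\mathbf{b}_{d,\delta})$ and $\|\mathbf{b}_{d,\delta}\|_\infty\le \sz(\mathbf{b}_{d,\delta})$ together with the assumption $\sz(\mathbf{b}_{d,\delta})\le \mathbf{c}d^\gamma\delta^{-\lambda}$ from Setting~\ref{set:PDE}, substituting $\delta=cd^{-\tau/2}\varepsilon^{1/2}$ shows that $\cL(\ba)$ and $\|\ba\|_\infty$ are also bounded by polynomials in $d$ and $\varepsilon^{-1}$, as is $\ln(R\|\ba\|_\infty)$.

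Finally, exactly as in the proof of Corollary~\ref{cor:appgen2}, the function $h$ from Theorem~\ref{thm:main} satisfies, for every $x\in (0,\infty)^5$, the estimate
\begin{equation*}
h(x)\le 128\bd^4 x_1^2\Big[1+x_4\big(x_1+x_3x_5+2x_5+\ln(128\bd\max\{1,|u|,|v|\})-3\big)\Big](1+x_2),
\end{equation*}
so substituting the polynomial bounds for $x_1=2\varepsilon^{-1}$, $x_3=\ln(R\|\ba\|_\infty)$, $x_4=\sz(\ba)$, $x_5=\cL(\ba)$ and isolating the factor $(1+\ln(\varrho^{-1}))=1+x_2$ produces a polynomial $p(d,\varepsilon^{-1})$ such that $p(d,\varepsilon^{-1})(1+\ln(\varrho^{-1}))$ dominates the right-hand side. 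Taking the maximum of this polynomial with those controlling $\sz(\ba)$ and $R$ gives the required single polynomial $p$, and item~(i) is then immediate from Theorem~\ref{thm:kolm_nngen}. There is no real obstacle here; the only care needed is the bookkeeping to ensure that the composition of polynomial bounds through $h$ stays polynomial and that the $\ln(\varrho^{-1})$ dependence factors out cleanly from the $x_1^2$ prefactor.
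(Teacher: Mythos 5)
Your proposal is correct and follows essentially the same route as the paper: the paper deduces the corollary in one line from Theorem~\ref{thm:mainapprox} combined with Corollary~\ref{cor:appgen2}, whereas you enter the same chain one step later via Theorem~\ref{thm:kolm_nngen} and then redo inline the polynomial bookkeeping (bounding $\cL(\ba)$, $\|\ba\|_\infty$, and $\ln(R\|\ba\|_\infty)$ by $\sz$-based polynomial estimates and factoring out $1+\ln(\varrho^{-1})$ from the bound on $h$) that Corollary~\ref{cor:appgen2} already encapsulates. The content is identical, only the granularity of citation differs.
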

\begin{proof}[Proof of Corollary~\ref{cor:kolm_main}]
This follows directly from Theorem \ref{thm:mainapprox} and Corollary~\ref{cor:appgen2}.
\end{proof}
\subsection{Pricing of high-dimensional options} \label{sec:european}
The proof of Theorem~\ref{thm:european} from the introductory section dealing with the pricing of high-dimensional European put options is now an easy consequence of the above theory. 
\begin{proof}[Proof of Theorem \ref{thm:european}]
We first show that the approximation of 
$(\varphi_d)_{d\in\N}$
by clipped neural networks according to Setting~\ref{set:PDE} is possible. Note that for every
$z\in\R$
it holds that
$ 
\min\{z,\bd\}=D-\ReLU_*(D-z). 
$ 
This implies that for every
$d\in\N$, $x\in\R^d$
it holds that
\begin{equation*}
\varphi_d(x)=\min\big\{\max\left\{\bd-c_d \cdot x,0\right\},\bd\big\} =\F (\peta_d)(x),
\end{equation*}
where 
\begin{equation*}
   \peta_d := \left(\left(-[c_d]^*,D\right), \left(-1,D\right),\left(-1,D\right)\right)\in\p_{(d,1,1,1),D}. 
\end{equation*}
Accordingly, Setting~\ref{set:PDE} is satisfied with 
\begin{equation*}
\zeta=\max\{\bd,6\}, \quad
\gamma=1, \quad
\beta=\kappa=\lambda=\nu=0, \quad
\mathbf{b}_{d,\varepsilon}=(d,1,1,1), \quad
\peta_{d,\varepsilon}=\peta_{d}.
\end{equation*}
Now Theorem~\ref{thm:kolm_nngen} and a straightforward calculation prove the claim.
\end{proof}

\newpage

\appendix
\section{Proofs}
This appendix contains various proofs and additional material omitted from the main text.

\subsection{Measurability of the empirical target function} \label{app:meas}
The following lemma shows that the empirical regression function can be chosen measurable as required in Setting~\ref{set:MathLearn}. This implies that the risk of the empirical regression function $\Omega \ni \omega\mapsto \cE_d\big(\widehat f_{d,m,\cH}(\omega)\big)$ is measurable which is necessary for bounding the generalization error in Theorem~\ref{thm:bound_sample}.
\begin{lemma}[measurability of the empirical regression function]
\label{lem:emp_meas}
Let
$u\in\R$,
$v\in(u,\infty)$,
and $d,m\in\N$, 
let 
$(\Omega,\mathcal{G},\P)$ be a probability space, let
$$(X_d^{(i)},Y_d^{(i)})\colon \Omega \to [u,v]^d\times \R,\quad i\in\N,$$ 
be random variables,
and let 
$\cH\subseteq \cont ([u,b]^d,\R)$ 
be compact. For every 
$\omega\in\Omega$ 
one can choose 
\begin{equation*}
\widehat f_{d,m,\cH}(\omega) \in \mathop{\mathrm{argmin}}_{f\in \mathcal{H}} \tfrac{1}{m}\sum_{i=1}^{m}\big(f(X_d^{(i)}(\omega))-Y_d^{(i)}(\omega)\big)^2 
\end{equation*}
in a way, such that it holds that\footnote{We denote by $\B(\mathcal{Z})$ the Borel $\sigma$-algebra of a topological space $\mathcal{Z}$.}
\begin{enumerate}[label=(\roman*)]
\item \label{it:target_1} 
$
\Omega\ni \omega \mapsto \widehat f_{d,m,\cH}(\omega) 
$ 
is 
$\G$/$\B(\cH)$-measurable and
\item
$
\Omega \ni \omega\mapsto \cE_d\big(\widehat f_{d,m,\cH}(\omega)\big)
$ 
is 
$\G$/$\B(\R)$-measurable.
\end{enumerate}
\end{lemma}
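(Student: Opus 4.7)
The plan is to reduce claim (i) to a standard measurable selection theorem and claim (ii) to a semicontinuity argument. First I observe that $\cH$, as a compact subset of $C([u,v]^d,\R)$ under the uniform norm, is a compact (hence separable and Polish) metric space. I then consider the empirical risk viewed as a function of two variables,
$$\Phi\colon \Omega \times \cH \to [0,\infty), \qquad \Phi(\omega,f) := \frac{1}{m}\sum_{i=1}^m \big(f(X_d^{(i)}(\omega))-Y_d^{(i)}(\omega)\big)^2,$$
and verify that $\Phi$ is a Carath\'eodory function: for fixed $\omega$, $f \mapsto \Phi(\omega,f)$ is continuous because uniform convergence in $\cH$ implies pointwise convergence at the finitely many points $X_d^{(1)}(\omega),\dots,X_d^{(m)}(\omega)$; for fixed $f \in \cH$, $\omega \mapsto \Phi(\omega,f)$ is $\G$-measurable as a composition of the measurable maps $X_d^{(i)},Y_d^{(i)}$ with the continuous maps $f$ and squaring.

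Next I would invoke the Measurable Maximum Theorem (equivalently, a combination of the Kuratowski--Ryll-Nardzewski selection theorem with a standard verification of weak measurability of the argmin correspondence) applied to the constant compact-valued correspondence $\omega\mapsto \cH$. Compactness of $\cH$ together with continuity of $\Phi(\omega,\cdot)$ guarantees that the value function $M(\omega):=\min_{f\in\cH}\Phi(\omega,f)$ is attained. Using a countable dense set $(g_k)_{k\in\N}\subseteq \cH$ (which exists by separability of $\cH$), one has $M(\omega)=\inf_{k\in\N}\Phi(\omega,g_k)$, so $M$ is $\G$-measurable as a countable infimum of measurable functions. The argmin correspondence $A(\omega):=\{f\in\cH:\Phi(\omega,f)=M(\omega)\}$ then has non-empty closed values and is weakly measurable, and the selection theorem yields a $\G$/$\B(\cH)$-measurable selection $\widehat f_{d,m,\cH}\colon\Omega\to\cH$ with $\widehat f_{d,m,\cH}(\omega)\in A(\omega)$ for every $\omega$, which is exactly claim \ref{it:target_1}.

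For the second claim, it suffices to establish Borel measurability of $\cE_d\colon\cH\to[0,\infty]$, since then the composition with the measurable selection from (i) is $\G$/$\B(\R)$-measurable. If $f_n\to f$ uniformly in $\cH$, then $(f_n(X_d)-Y_d)^2\to(f(X_d)-Y_d)^2$ pointwise on $\Omega$, and Fatou's lemma yields $\cE_d(f)\le\liminf_n\cE_d(f_n)$. Hence $\cE_d$ is lower semicontinuous on $\cH$, and in particular Borel measurable, giving the desired measurability of $\omega\mapsto\cE_d(\widehat f_{d,m,\cH}(\omega))$.

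The main obstacle is largely bookkeeping: once $\Phi$ is identified as a Carath\'eodory function on a compact Polish domain the measurable selection is standard, but one needs to invoke the correct formulation of the selection theorem and argue via lower semicontinuity rather than continuity of $\cE_d$, since $Y_d$ is not assumed to be square-integrable in the hypotheses of the lemma and hence no dominating function is available for a dominated-convergence argument.
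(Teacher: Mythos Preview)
Your proposal is correct and for item~(i) follows essentially the same route as the paper: both arguments verify that the empirical risk is a Carath\'eodory function on $\Omega\times\cH$ (measurable in $\omega$, continuous in $f$) and then invoke the Measurable Maximum Theorem (the paper cites \cite[Theorem~18.19]{aliprantis2007}) to extract a measurable selector from the argmin correspondence over the compact separable space $\cH$.

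The only genuine difference lies in item~(ii). The paper proves that $\cE_d$ is \emph{continuous} on $\cH$ via the reverse triangle inequality in $L^2(\P)$, namely
\[
\big|\cE_d(f)^{1/2}-\cE_d(g)^{1/2}\big|\le \|f(X_d)-g(X_d)\|_{L^2(\P)}\le \|f-g\|_{L^\infty},
\]
which is legitimate because in Setting~\ref{set:MathLearn} the label $Y_d$ takes values in $[-D,D]$ and hence $\cE_d$ is finite on $\cH$. You instead establish only lower semicontinuity via Fatou, explicitly avoiding any integrability assumption on $Y_d$. Your caution is somewhat misplaced---$\cE_d$ is only defined through Setting~\ref{set:MathLearn}, where $Y_d$ is bounded, so the paper's continuity argument is available---but your argument is not wrong, and it has the virtue of working verbatim under the weaker hypothesis $Y_d\colon\Omega\to\R$ that the lemma statement literally lists. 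Either way, Borel measurability of $\cE_d$ follows and the composition is measurable.
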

\begin{proof}[Proof of Lemma~\ref{lem:emp_meas}]
First observe that 
$\cH$
is a separable metric space induced by the uniform norm $\|{\cdot}\|_{\leb^\infty}$ and that for every
$f\in\cH$
the mapping
\begin{equation*}
\Omega\ni\omega \mapsto \widehat{\cE}_{d,m}(f)(\omega) 
\end{equation*}
is 
$\G$/$\B(\R)$-measurable. 
By the reverse triangle inequality we obtain for every 
$f,g\in\cH$ that
\begin{equation*}
\begin{split}
\big| \widehat{\cE}_{d,m}(f)^{1/2} - \widehat{\cE}_{d,m}(g)^{1/2} \big| 
\le \tfrac{1}{\sqrt{m}} \big\| (f(X_d^{(i)})-g(X_d^{(i)}))_{i=1}^m\big\|_\eu 
\le \| f-g \|_{\leb^\infty}.
\end{split}
\end{equation*}
This shows that for every 
$\omega\in\Omega$ 
the function 
$ 
\cH \ni f \mapsto \widehat{\cE}_{d,m}(f)(\omega) 
$ 
is continuous and the Measurable Maximum Theorem in~\cite[Theorem 18.19]{aliprantis2007} ensures that the set-valued function of minimizers of 
\begin{equation*} \label{eq:proof_min}
\min_{f\in\cH}\, \widehat{\cE}_{d,m}(f)
\end{equation*}
admits a measurable selector. That is to say, there exists a
$\G$/$\B(\cH)$-measurable 
mapping 
$\widehat f_{d,m,\mathcal{H}}\colon\Omega\to\cH$
such that for every
$\omega\in\Omega$ 
it holds that
$$\widehat f_{d,m,\mathcal{H}}(\omega)\in \mathop{\mathrm{argmin}}_{f\in \mathcal{H}}\widehat{\cE}_{d,m}(f)(\omega). $$
This establishes item~\ref{it:target_1}. For the proof of the second 
item observe that the risk $\cE_d\colon\cH\to \R$
is continuous and thus
$\B(\cH)$/$\B(\R)$-measurable.
Indeed, an analogous computation as for the empirical risk above shows that for $f,g\in\cH$ it holds that
\begin{equation*}
\begin{split}
\big| \cE_d(f)^{1/2} - \cE_d(g)^{1/2} \big| 
\le  \left\| f(X_d)-g(X_d)\right\|_{\leb^2(\P)} 
\le \| f-g \|_{\leb^\infty}.
\end{split}
\end{equation*}
This yields the claim as compositions of measurable functions are again measurable.
\end{proof}

\subsection{Bias-variance decomposition} \label{app:bias_var}
\begin{proof}[Proof of Lemma~\ref{lem:biasvariance}]
For every $f\in \leb^2(\PX)$ 
it holds that
\begin{equation}
\begin{split} \label{eq:ls}
\cE_d(f)&=\E\left[\big(f(X_d)-f^*_{d}(X_d)+f^*_{d}(X_d)-Y_d\big)^2\right] \\
&=\E\left[\big(f(X_d)-f^*_{d} (X_d) \big)^2\right]
+
\E\left[\big(f^*_{d}(X_d) - Y_d\big)^2\right] \\
&\quad+ 2\E\left[\big(f(X_d)-f^*_{d}(X_d)\big)\big(f^*_{d}(X_d) -Y_d\big)\right]      
\end{split}
\end{equation}
Observe that, due to the fact that it holds $\P$-a.s.\@ that $f^*_{d}(X_d)= \E\left[Y_d\big|X_d\right]$,
the tower property of the conditional expectation establishes for every $f\in \leb^2(\PX)$ that 
\begin{equation*}
\begin{split}
\E\left[\big(f(X_d)-f^*_{d}(X_d)\big)\big(f^*_{d}(X_d) -Y_d\big)\right]&=\E\left[\E\left[\big(f(X_d)-f^*_{d}(X_d))\big)\big(f^*_{d}(X_d) -Y_d\big)\Big|X_d\right]\right]\\
&=\E\left[\big(f(X_d)-f^*_{d}(X_d)\big)\big(f^*_{d}(X_d) -\E\left[Y_d\big|X_d\right]\big)\right]=0
\end{split}
\end{equation*}
which, together with \eqref{eq:ls}, implies that
\begin{equation}\label{eq:ls2}
\big\|f-f^*_{d}\big\|_{\leb^2(\PX)}^2=\E\left[\big(f(X_d)-f^*_{d}(X_d)\big)^2\right]
=\cE_d(f)-\cE_d(f^*_{d}).\end{equation}
This proves items~\ref{it:bv1} and~\ref{it:bv2} and shows that it holds that
\begin{equation*}
\big\|\widehat f_{d,m,\cH}-f^*_{d}\big\|_{\leb^2(\PX)}^2
= \cE_d(\widehat f_{d,m,\cH})-\cE_d(f_{d,\cH})+\cE_d(f_{d,\cH})-\cE_d(f^*_{d}).
\end{equation*}
Finally, applying \eqref{eq:ls2} (with $f\leftarrow f_{d,\cH}$)
proves the lemma.
\end{proof}

\subsection{Bound on the generalization error}
\label{app:gen}
\begin{proof}[Proof of Theorem~\ref{thm:bound_sample}]
First note that by assumption for every 
$f\in\cH$ 
it holds that
\begin{equation*}
|f(X_d)-Y_d| \le \|f\|_{\leb^\infty} + | Y_d| \le 2 \bd
\end{equation*}
and analogously for the samples $((X_d^{(i)},Y_d^{(i)}))_{i=1}^m$.
The elementary identity 
\begin{equation*}
(y_1-z)^2-(y_2-z)^2 = (y_1-y_2) (y_1+y_2-2z)
\end{equation*}
for real numbers $y_1,y_2,z\in\R$ and Jensen's inequality 
imply for every 
$f,g\in\cH$ that 
\begin{equation*}
\big|\cE_d(f)-\cE_d(g)\big| \le \E\left[\big|\big(f(X_d)-g(X_d)\big)\big(f(X_d)+g(X_d)-2Y_d\big)\big|\right] \le 4\bd\|f-g \|_{\leb^\infty}
\end{equation*}
and
\begin{equation*}
\begin{split}
\big|\widehat{\cE}_{d,m}(f)-\widehat{\cE}_{d,m}(g)\big| &\le \tfrac1m \sum_{i=1}^m\big|\big(f(X^{(i)}_d)-g(X^{(i)}_d)\big)\big(f(X^{(i)}_d)+g(X^{(i)}_d)-2Y^{(i)}_d\big)\big| \\
&\le 4\bd\|f-g\|_{\leb^\infty}.
\end{split}
\end{equation*}
Now define
$N:=\operatorname{Cov}\big(\mathcal{H},\|{\cdot}\|_{\leb^\infty},\frac{\varepsilon}{32\bd}\big)$ 
and choose
$
f_1,f_2,\dots,f_N \in \mathcal{H}
$ 
such that the balls
\begin{equation*}
\mathit{Ball}_i:=\left\{f\in\cH\colon \|f-f_i\|_{\leb^\infty} \le \frac{\varepsilon}{32\bd}\right\},\quad
i\in\{1,2,\dots,N\},
\end{equation*}
cover 
$\mathcal{H}$. 
This establishes that for every
$i\in\{1,2,\dots,N\}$, $f\in \mathit{Ball}_i$
it holds that
\begin{equation} \label{eq:unibound}
\begin{split}
\big|\cE_{d}(f)-\widehat{\cE}_{d,m}(f)\big| &\le \big|\cE_{d}(f)-\cE_{d}(f_i)\big|+\big|\cE_{d}(f_i)-\widehat{\cE}_{d,m}(f_i)\big|+\big|\widehat{\cE}_{d,m}(f_i)-\widehat{\cE}_{d,m}(f)\big|\\
&\le 8\bd\|f-f_i\|_{\leb^\infty} + \big|\cE_{d}(f_i)-\widehat{\cE}_{d,m}(f_i)\big|\\
&\le \varepsilon/4+ \big|\cE_{d}(f_i)-\widehat{\cE}_{d,m}(f_i)\big|.
\end{split}
\end{equation}
Our assumptions yield that for every 
$\omega \in \Omega$
it holds that
\begin{equation} \label{eq:est_emp}
\begin{split}
&\cE_d\big(\widehat f_{d,m,\mathcal{H}}(\omega)\big)-\cE_d\big(f_{d,\mathcal{H}}\big) \\
&\le \cE_d\big(\widehat f_{d,m,\mathcal{H}}(\omega)\big)-\widehat{\cE}_{d,m}\big(\widehat f_{d,m,\mathcal{H}}(\omega)\big)(\omega)+\widehat{\cE}_{d,m}\big(f_{d,\mathcal{H}}\big)(\omega)-\cE_d\big(f_{d,\mathcal{H}}\big) \\
&\le 2 \sup_{f\in\cH} \big|\cE_d(f)-\widehat{\cE}_{d,m}(f)(\omega) \big|.
\end{split}
\end{equation}
In summary~\eqref{eq:unibound} and~\eqref{eq:est_emp} imply that 
\begin{equation}
\begin{split}\label{eq:subsets}
&\Big\{\omega\in\Omega\colon \cE_d\big(\widehat f_{d,m,\mathcal{H}}(\omega)\big)-\cE_d\big(f_{d,\mathcal{H}}\big) \ge \varepsilon \Big\} \\ &\subseteq 
\bigcup_{i=1}^N \Big\{\omega\in\Omega\colon \sup_{f\in \mathit{Ball}_i} \big|\cE_d(f)-\widehat{\cE}_{d,m}(f)(\omega) \big| \ge \varepsilon/2 \Big\} \\
&\subseteq
\bigcup_{i=1}^N \Big\{\omega\in\Omega\colon \big|\cE_d(f_i)-\widehat{\cE}_{d,m}(f_i)(\omega) \big| \ge \varepsilon/4 \Big\} .
\end{split}
\end{equation}
Observe that for fixed
$f\in\cH$
it holds that the random variables 
$E_i:=\big(f(X^{(i)}_d)-Y^{(i)}_d\big)^2$,
$i\in\{1,2,\dots,m\}$,
are independent and satisfy
\begin{equation*}
\E\left[E_i\right]=\cE_d(f),\quad \tfrac1m \sum_{i=1}^m  E_i =\widehat{\cE}_{d,m}(f),\quad \text{and} \quad 0\le E_i \le 4\bd^2
\end{equation*}
which by Hoeffding's inequality (see~\cite[Theorem 2]{hoeffding1963})
ensures that 
\begin{equation*} 
\P\left[\big|\cE_{d}(f)-\widehat{\cE}_{d,m}(f)\big|\ge \varepsilon/4 \right]\le 2\exp\left(-\frac{m\varepsilon^2}{128\bd^4}\right).
\end{equation*}
Together with~\eqref{eq:subsets}, the monotonicity and subadditivity of the probability measure, and the measurability assumptions according to Lemma~\ref{lem:emp_meas} this implies that
\begin{equation*}
\begin{split}
\P\left[\cE_d\big(\widehat f_{d,m,\mathcal{H}}\big)-\cE_d\big(f_{d,\mathcal{H}}\big) \ge \varepsilon\right] &\le \sum_{i=1}^N \P\left[\big|\cE_d(f_i)-\widehat{\cE}_{d,m}(f_i) \big| \ge \varepsilon/4 \right] \le 2N\exp\left(-\frac{m\varepsilon^2}{128\bd^4}\right).
\end{split}
\end{equation*}
Using the complement rule and plugging in the definition of
$N$
proves the theorem.
\end{proof}

\subsection{Clipped neural networks are standard neural networks} \label{app:nn} 
We show that \enquote{clipped} neural network hypothesis classes $\cN_{\ba,R,\bd}^{u,v}$ are in fact subsets of \enquote{non-clipped} ones.
\begin{lemma}[clipping function as neural network]\label{lem:clipisNN}
Assume Setting~\ref{set:NN} and let
\begin{equation*}
\ptheta := \left(\left(\begin{bmatrix}\phantom{-}1 \\ -1\end{bmatrix},\begin{bmatrix} 0  \\ 0 \end{bmatrix}\right), \left(\begin{bmatrix}-1 & \phantom{-}0 \\ \phantom{-}0 & -1\end{bmatrix},\begin{bmatrix}\bd  \\ \bd\end{bmatrix}\right),\left(\begin{bmatrix}-1 & 1 \end{bmatrix},0\right)\right)\in\p_{(1,2,2,1),D}.
\end{equation*} 
Then it holds that
$$
\cC_{\bd}=\F(\ptheta).
$$
\end{lemma}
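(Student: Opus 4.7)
The plan is a direct computation: unroll the three affine layers and two ReLU activations of the network defined by $\ptheta$, obtain a closed-form expression for $\F(\ptheta)(x)$ in terms of ReLUs, and verify equality with $\cC_{\bd}$ by case analysis on the sign and magnitude of $x$.

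First I would explicitly trace the forward pass. The first affine map sends $x \in \R$ to $(x,-x)^\top$, which after componentwise ReLU becomes $(\ReLU(x),\ReLU(-x))^\top$. The second affine map negates both coordinates and adds $\bd$, yielding $(\bd-\ReLU(x),\bd-\ReLU(-x))^\top$; after the second ReLU this is $(\ReLU(\bd-\ReLU(x)),\ReLU(\bd-\ReLU(-x)))^\top$. The final affine map with row $(-1,1)$ and bias $0$ then gives
\begin{equation*}
\F(\ptheta)(x) = \ReLU\bigl(\bd-\ReLU(-x)\bigr) - \ReLU\bigl(\bd-\ReLU(x)\bigr).
\end{equation*}

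Next I would check this expression matches $\cC_{\bd}(x)=\min\{|x|,\bd\}\operatorname{sgn}(x)$ by splitting into four ranges. For $x\ge \bd$ the second summand vanishes and the first equals $\bd$, giving $\bd=\cC_{\bd}(x)$. For $0\le x<\bd$ the first summand equals $\bd$ and the second equals $\bd-x$, giving $x=\cC_{\bd}(x)$. The cases $-\bd<x\le 0$ and $x\le -\bd$ follow by the analogous computations (or by observing that the whole expression is odd in $x$, since swapping $x\leftrightarrow -x$ swaps the two summands and flips the sign).

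There is really no obstacle here beyond bookkeeping: the claim is a finite piecewise-linear identity and the only thing to be careful about is the direction of the signs introduced by the weight matrices in the second and third layers, and the fact that the parameter bound $\bd$ is admissible for $\p_{(1,2,2,1),\bd}$ (which holds since $\bd\ge 1$ ensures all weight and bias entries, each in $\{-1,0,1,\bd\}$, lie in $[-\bd,\bd]$).
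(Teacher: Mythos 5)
Your computation is correct and matches the paper's proof exactly: the paper likewise unrolls the network to the identity $\F(\ptheta)(x)=-\ReLU(\bd-\ReLU(x))+\ReLU(\bd-\ReLU(-x))$ and concludes by a case distinction. Your additional observation that $\bd\ge 1$ makes the parameter bound admissible is a correct (and implicit in the paper) detail.
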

\begin{proof}[Proof of Lemma~\ref{lem:clipisNN}]
A case distinction establishes that for every $x\in\R$ it holds that
\begin{equation*}
  \F(\ptheta)(x)=-\ReLU_*\big(\bd-\ReLU_*(x)\big)+\ReLU_*\big(-\ReLU_*(-x)+\bd\big)=\cC_{\bd}(x),
\end{equation*}
which proves the claim.
\end{proof}
\begin{corollary}[clipped neural networks are standard neural networks] \label{cor:nnissub}
Assume Setting~\ref{set:NN}. Let $d\in\N$, $R\in [D,\infty)$, and
$\mathbf{a}=(a_0,a_1,\dots ,a_L)\in \bA_d$, and define 
\begin{equation*}
    \mathbf{b}:=(a_0,a_1,\dots ,a_{L},2,2,1)\in \bA_d.
\end{equation*}
Then it holds that
$$
\cN_{\ba,R,\bd}^{u,v}\subseteq \cN_{\mathbf{b},R}^{u,v}.
$$
\end{corollary}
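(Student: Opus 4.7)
The plan is to explicitly construct, given any $f\in\cN_{\ba,R,\bd}^{u,v}$, a parametrization $\tilde{\ptheta}\in\p_{\mathbf{b},R}$ realizing $f$, by concatenating the parametrization of $g$ (where $f=\cC_{\bd}\circ g$) with the explicit parametrization of $\cC_{\bd}$ from Lemma~\ref{lem:clipisNN}. First I would unpack the definition of $\cN_{\ba,R,\bd}^{u,v}$ to write $f=\cC_{\bd}\circ\F^{u,v}(\ptheta)$ for some $\ptheta=((W_l,B_l))_{l=1}^L\in\p_{\ba,R}$, and record the parametrization $\ptheta^c=((W^c_l,B^c_l))_{l=1}^3\in\p_{(1,2,2,1),\bd}$ produced by Lemma~\ref{lem:clipisNN}, which satisfies $\F(\ptheta^c)=\cC_{\bd}$ and whose entries lie in $[-\bd,\bd]\subseteq[-R,R]$ because $R\ge\bd$.

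I would then assemble $\tilde{\ptheta}\in\p_{\mathbf{b}}$ by placing the pair $(W_l,B_l)$ at layer $l$ for $l=1,\ldots,L$ and the pair $(W^c_l,B^c_l)$ at layer $L+l$ for $l=1,2,3$. The dimensions of $\mathbf{b}=(a_0,\ldots,a_L,2,2,1)$ are set up precisely so that this concatenation is dimensionally admissible: the output dimension $a_L=1$ of the $L$-th layer matches the input dimension of the first layer of $\ptheta^c$, and the subsequent dimensions $2\to 2\to 1$ reproduce those of $\ptheta^c$. All entries of $\tilde{\ptheta}$ then lie in $[-R,R]$, so $\tilde{\ptheta}\in\p_{\mathbf{b},R}$.

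The final step is to verify $\F^{u,v}(\tilde{\ptheta})=f$ on $[u,v]^d$, which I would do by unwinding the definition of the realization function: the first $L$ affine maps reproduce $\F^{u,v}(\ptheta)=g$, while the last three affine maps reproduce $\F(\ptheta^c)=\cC_{\bd}$, so that the composition collapses to $\cC_{\bd}\circ g=f$. The main point requiring careful bookkeeping is the junction between the two sub-networks inside the definition of $\F$, in particular how the $\ReLU_*$ that the architecture $\mathbf{b}$ inserts after the $L$-th layer is absorbed; the explicit form of $\ptheta^c$ provided by Lemma~\ref{lem:clipisNN}, together with the fact that all of its parameter entries are bounded by $\bd\le R$, is exactly what makes this bookkeeping go through, and the verification then closes the argument.
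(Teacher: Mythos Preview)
Your overall strategy matches the paper's: represent $\cC_{\bd}$ by the small network $\ptheta^c$ of Lemma~\ref{lem:clipisNN} and compose with $\ptheta$. However, your concrete construction has a genuine gap at precisely the point you flag but then wave away.

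With the layer-by-layer concatenation you describe (place $(W_l,B_l)$ at layer $l$ for $l\le L$ and $(W^c_l,B^c_l)$ at layer $L+l$), the realization with architecture $\mathbf{b}$ is
\[
\cA_{W^c_3,B^c_3}\circ\ReLU_*\circ\cA_{W^c_2,B^c_2}\circ\ReLU_*\circ\cA_{W^c_1,B^c_1}\circ\ReLU_*\circ\cA_{W_L,B_L}\circ\ReLU_*\circ\cdots\circ\cA_{W_1,B_1},
\]
which equals $\F(\ptheta^c)\bigl(\ReLU(g(x))\bigr)=\cC_{\bd}\bigl(\max\{g(x),0\}\bigr)$, \emph{not} $\cC_{\bd}(g(x))$. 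Whenever $g(x)<0$ this returns $0$, whereas $\cC_{\bd}(g(x))\in[-\bd,0)$; so the extra $\ReLU_*$ after layer $L$ is not ``absorbed'' by the explicit form of $\ptheta^c$, and the verification does not close.

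What the paper actually invokes (via the composition of \cite[Definition~2.2]{petersen2017optimal}) is to \emph{merge} the last affine layer of $\ptheta$ with the first affine layer of $\ptheta^c$: the junction layer has weights $W^c_1W_L=\begin{bmatrix}W_L\\-W_L\end{bmatrix}$ and bias $W^c_1B_L+B^c_1=\begin{bmatrix}B_L\\-B_L\end{bmatrix}$, so that no spurious $\ReLU_*$ is inserted between the two sub-networks. The point of the specific $\ptheta^c$ chosen in Lemma~\ref{lem:clipisNN} (first layer $W^c_1=(1,-1)^{\!*}$, $B^c_1=0$, and all other entries in $\{-1,0,1,\bd\}$) is that this merged layer still has all entries bounded by $R$; this is the content of the paper's remark that ``composition \ldots does not change the magnitude of its parameters'' and of the footnote explaining why the shorter representation $\cC_{\bd}(x)=-\bd+\ReLU(2\bd-\ReLU(\bd-x))$ was avoided. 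In short, the explicit form of $\ptheta^c$ is there to control the parameter bound of the \emph{merged} layer, not to make your naive concatenation correct.
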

\begin{proof}[Proof of Corollary~\ref{cor:nnissub}]
The proof follows by the representation of the clipping function in Lemma~\ref{lem:clipisNN} and the fact that composition with a neural network does not change the magnitude of its parameters;\footnote{Because of that we did not choose the easier representation $\cC_{\bd}(x)=-\bd+\ReLU_*\big(2\bd-\ReLU_*(\bd-x)\big)$.} see also~\cite[Definition 2.2]{petersen2017optimal} for a formal definition.
\end{proof}
\subsection{Lipschitz continuity of the realization map} \label{app:cov}
\begin{proof}[Proof of Theorem~\ref{thm:nn_lip}]
Define $L:=\cL(\ba)$ and $\mathfrak{m}:=\max \big\{1,|u|,|v|\big\}$. We will show the following stronger statement. For every
$\ptheta,\peta \in \p_{\ba,R}$
it holds that
\begin{equation} \label{eq:claim_lip}
\big\| \F^{u,v}(\ptheta) -  \F^{u,v}(\peta) \big\|_{\leb^\infty} \le   \Big[  \mathfrak{m}L R^{L-1}  \|\ba\|^L_\infty  + \sum_{l=1}^{L} l (R \|\ba\|_\infty)^{l-1} \Big] \|\ptheta -\peta \|_\infty.
\end{equation}  
This directly implies the statement of Theorem~\ref{thm:nn_lip}, as it holds that 
\begin{equation*}
    \sum_{l=1}^{L} l (R \|\ba\|_\infty)^{l-1} \le L^2 (R \|\ba\|_\infty)^{L-1} \le \mathfrak{m} L^2 R^{L-1} \|\ba\|_\infty^L.
\end{equation*}
For the proof of~\eqref{eq:claim_lip} let us fix
$\ptheta,\peta \in \p_{\ba,R}$
given by
\begin{equation*}
\ptheta=((W_l,B_l))_{l=1}^{L} \quad \text{and} \quad \peta=((V_l,A_l))_{l=1}^{L}. 
\end{equation*}
Let $\rr:=\|\ptheta -\peta \|_\infty$, and for every 
$s\in\{1,\dots,L\}$ define
the partial parametrizations 
\begin{equation*}
\ptheta(s)=((W_l,B_l))_{l=1}^{s}, 
\quad \text{and} \quad 
\peta(s)=((V_l,A_l))_{l=1}^{s},
\end{equation*}
the partial realization functions $\f_s:=\F^{u,v}(\ptheta(s))$ and $\g_s:=\F^{u,v}(\peta(s))$,
the partial errors\footnote{For vector-valued functions $f\in \cont([u,v]^d,\R^n)$ we define the uniform norm on $[u,v]^d$ by $\|f\|_{\leb^{\infty}}=\|f\|_{\leb^{\infty}([u,v]^d)}:=\max_{x\in [u,v]^d} \|f(x)\|_\infty.$}
\begin{equation*}
\mathfrak{e}_0:=0 \quad \text{and} \quad \mathfrak{e}_s:=
\big\| \f_s -  \g_s \big\|_{\leb^\infty},
\end{equation*}
and the partial maxima
\begin{equation*}
\mathfrak{m}_0:=\mathfrak{m} = \max \big\{1,|u|,|v|\big\} \quad \text{and} \quad \mathfrak{m}_s:=\max\left\{1, \big\| \f_s \big\|_{\leb^\infty} ,  \big\| \g_s \big\|_{\leb^\infty} \right\}.
\end{equation*}
We are interested in estimating the error 
$\mathfrak{e}_{L}$ and try to bound 
$\mathfrak{m}_{s}$ 
relative to
$\mathfrak{m}_{s-1}$
and 
$\mathfrak{e}_{s}$
relative to
$\mathfrak{e}_{s-1}$.
Note that for every $s\in\{2,\dots,L\}$ it holds that
\begin{equation*}
\begin{split}
\big\| \f_{s} \big\|_{\leb^\infty} &=  \big\|  W_s  \ReLU_*\big(\f_{s-1}\big)  + B_s \big\|_{\leb^\infty}
\le  R\|\ba\|_\infty  \mathfrak{m}_{s-1} + R.
\end{split}
\end{equation*}
Analogous computations for the case 
$s=1$ and the function $g_s$
establish that for every 
$s\in\{1,\dots,L\}$
it holds that
$ 
\mathfrak{m}_{s}\le R \|\ba\|_\infty \mathfrak{m}_{s-1} + R
$. 
By induction this implies that 
\begin{equation}
\label{eq:max_nn}
\mathfrak{m}_{s}\le \mathfrak{m} (R \|\ba\|_\infty)^{s} + R \sum_{l=0}^{s-1} (R \|\ba\|_\infty)^l
\end{equation}
for every $s\in\{1,\dots,L\}$.
Moreover, note that for every 
$s\in\{2,\dots,L\}$
it holds that
\begin{equation*} 
\begin{split}
\mathfrak{e}_s &= \big\| \big[ W_s  \ReLU_*\big(\f_{s-1}\big)  + B_s \big]- \big[ V_s \ReLU_*\big(\g_{s-1}\big) + A_s\big] \big\|_{\leb^\infty} \\
&\le \big\|\big[W_s-V_s\big]\ReLU_*\big(\f_{s-1}\big)\big\|_{\leb^\infty}  
+ \big\|V_s \big[\ReLU_*\big(\f_{s-1}\big)-\ReLU_*\big(\g_{s-1}\big) \big]\big\|_{\leb^\infty} + \rr \\
&\le  \|\ba\|_\infty \big( \rr \mathfrak{m}_{s-1} +R \mathfrak{e}_{s-1} \big)+ \rr.
\end{split}
\end{equation*}
Together with~\eqref{eq:max_nn}, one proves by induction that 
for every $s\in\{1,2,\dots,L\}$ it holds that
\begin{equation*} 
\mathfrak{e}_{s}\le  \Big[  \mathfrak{m} s R^{s-1} \|\ba\|^s_\infty  + \sum_{l=1}^{s} l (R \|\ba\|_\infty)^{l-1} \Big] \rr.
\end{equation*}
Setting $s=L$ proves the claim in~\eqref{eq:claim_lip}.
\end{proof}

\subsection{Covering numbers of neural network hypothesis classes}
\label{app:cov_nn_hyp}
\begin{proof}[Proof of Proposition~\ref{prop:cov_nn}]
To simplify the notation we define $\mathfrak{m}:=\max\big\{1,|u|,|v|\big\}$,
\begin{equation*}
\Delta:= \frac{\rr}{2\mathfrak{m}\cL(\ba)^2 R^{\cL(\ba)-1}\|\ba\|_\infty^{\cL(\ba)}},\quad \text{and} \quad N:=\cov\big(\p_{\ba,R},\|{\cdot}\|_\infty,\Delta\big).
\end{equation*}
Choose
$
\ptheta_1,\ptheta_2,\dots,\ptheta_N \in \p_{\ba,R}
$
such that 
for every
$\ptheta\in \p_{\ba,R}$
there exists 
$i\in\{1,2,\dots,N\}$
with
$ 
\|\ptheta-\ptheta_i\|_\infty \le \Delta
$,
which by Theorem~\ref{thm:nn_lip} and
the fact that $\cC_{\bd}$ is nonexpansive 
implies that
\begin{equation*}
\begin{split}
\big\| \cC_{\bd} \circ \F^{u,v}(\ptheta) -  \cC_{\bd} \circ \F^{u,v}(\ptheta_i) \big\|_{\leb^\infty} &\le  \big\| \F^{u,v}(\ptheta) -  \F^{u,v}(\ptheta_i) \big\|_{\leb^\infty} \\
&\le 2\mathfrak{m} \cL(\ba)^2 R^{\cL(\ba)-1}\|\ba\|_\infty^{\cL(\ba)} \|\ptheta -\ptheta_i \|_\infty  \le \rr.
\end{split}
\end{equation*}
Lemma~\ref{lem:cov_finite} and identifying $\p_{\ba,R} \simeq \big\{ \ptheta \in \R^{\sz(\ba)}: \|\ptheta\|_\infty \le R \big\}$ hence show that 
\begin{equation*}
\begin{split}
\ln \cov\big(\cN_{\ba,R,\bd}^{u,v},\|{\cdot}\|_{\leb^\infty},\rr\big) &\le 
\ln \cov\big(\cN_{\ba,R}^{u,v},\|{\cdot}\|_{\leb^\infty},\rr\big)\le \ln \cov\big(\p_{\ba,R},\|{\cdot}\|_\infty,\Delta\big) \\
&\le \sz(\ba) \ln\Big(\left\lceil\frac{R}{\Delta} \right\rceil\Big) \le \sz(\ba) \ln\Big(\frac{2R}{\Delta}\Big),
\end{split}
\end{equation*}
and this proves the proposition.
\end{proof}

\subsection{Kolmogorov equation as learning problem}
\label{app:learn}

\begin{proof}[Proof of Lemma~\ref{lem:kolm_reg}]
The proof is based on the Feynman--Kac formula for viscosity solutions of Kolmogorov equations, which states that for every 
$x\in\R^d$
it holds that
\begin{equation}\label{eq:FK_vis}
F_d(T,x)=\E\big[\varphi_d\left(S_T^x\right)\big];
\end{equation}
see~\cite[Corollary 2.23(ii)]{grohs2018approx}.
We claim that for every 
$A\in\mathcal{B}([u,v]^d)$
it holds that
\begin{equation*} 
\E\big[ \1_A(X_d) \varphi_d(S^{X_d}_T)\big]=\int_{A} \E\left[\varphi_d(S_T^x)\right] \, d\PX(x).
\end{equation*}
This would prove the lemma as it implies that for 
$\PX$-a.s.\@ 
$x\in [u,v]^d$
it holds that 
\begin{equation*}
\E\big[\varphi_d(S^{X_d}_T)\big|X_d=x\big]=\E\big[\varphi_d\left(S_T^x\right)\big],
\end{equation*}
which by~\eqref{eq:FK_vis} and Setting~\ref{set:PDE} ensures that for 
a.e.\@ 
$x\in[u,v]^d$ 
it holds that
\begin{equation*}
f^*_{d}(x)=\E\left[Y_d\big|X_d=x\right]=\E\big[\varphi_d(S^{X_d}_T)\big|X_d=x\big]=\E\big[\varphi_d\left(S_T^x\right)\big]=F_d(T,x).
\end{equation*}
For the proof of the claim let us fix 
$A\in\mathcal{B}([u,v]^d)$ and let
$g_\varepsilon\in \cont^\infty(\R^d,\R)$,
$\varepsilon\in(0,1)$,
be a family of mollifiers. For every
$\varepsilon\in(0,1)$ we
define the convolution with the indicator function 
$\1_{A,\varepsilon}:=\1_A * g_\varepsilon\in \cont^\infty(\R^d,\R)$
and the continuous and bounded mapping 
\begin{equation*}
\Phi_\varepsilon :\left\{\begin{array}{ccc}\cont([0,T],\R^d)&\to &\R \\ f&\mapsto & \1_{A,\varepsilon}(f(0)) \varphi_d(f(T)). \end{array}\right.
\end{equation*}
In~\cite[Lemma 2.6(v)]{beckbecker2018} it is shown that for every $\varepsilon\in(0,1)$ it holds that 
\begin{equation*}
\E\big[ \1_{A,\varepsilon}(X_d) \varphi_d(S^{X_d}_T)\big]=\tfrac{1}{(v-u)^d} \int_{[u,v]^d} \1_{A,\varepsilon}(x)\E\big[\varphi_d\left(S_T^x\right)\big] \, dx.
\end{equation*}
The fact that $\lim_{\varepsilon\to 0} \1_{A,\varepsilon}(x)=\1_A(x)$ for a.e.\@ 
$x\in\R^d$ (see, for instance,~\cite[Appendix C.5]{evans2010partial}) and the dominated convergence theorem prove the claim when letting $\varepsilon$ tend to zero.
\end{proof}

\subsection{Neural network approximation result for solutions of Kolmogorov equations} \label{app:approx}
The proof of Theorem~\ref{thm:mainapprox} is given after the following two auxiliary lemmas. First, we show that given an SDE with affine coefficients $\sigma_d$ and $\mu_d$, its solution $S^x_T$ also admits a (random) affine representation.
\begin{lemma}[representation of SDE solutions]\label{lem:affine}
Assume Setting \ref{set:PDE}. Let $d\in \N$, let $e_i\in\R^d$, $i\in\{1,\dots,d\}$, be the standard basis in $\R^d$, for every $p,z\in[0,\infty)$ let
\begin{equation*}
    \mathfrak{c}_p(z):= 2^{p/2}\big(z+KT+\max\{2,p\}K\sqrt{T}\big)^p\exp\big(pK^2T\big[\sqrt{T}+\max\{2,p \}\big]^2\big),
\end{equation*}
and define the random variables 
$\mathfrak{M}:\Omega\to \R^{d\times d}$ and
$\mathfrak{N}:\Omega\to \R^d$ 
by
\begin{equation*}
\mathfrak{M}:=\begin{bmatrix} S_T^{e_1}-S_T^{\zero} & S_T^{e_2}-S_T^{\zero} &\dots & S_T^{e_d}-S_T^{\zero} \end{bmatrix} \quad \text{and} \quad \mathfrak{N}:=S_T^{\zero}.
\end{equation*}
Then for every $x\in \R^d$ it holds $\P\text{-a.s.}$ that $
S^{x}_T=\mathfrak{M}x+\mathfrak{N}=\cA_{\mathfrak{M},\mathfrak{N}}(x)$
and it holds that\footnote{Recall that for a matrix $M\in\R^{d\times d}$ we denote by $\|M\|_2:=\big(\sum_{i,j=1}^d M^2_{ij}\big)^{1/2}$ its Frobenius norm.}
\begin{enumerate}[label=(\roman*)]
\item \label{it:repr1}
$\mathbb{E}\big[\left\|\mathfrak{M}\right\|_{\eu} + \left\|\mathfrak{N} \right\|_{\eu}\big]
\le 3 \mathfrak{c}_1(1)d
$ and
\item \label{it:repr2} $\big\|\mathbb{E}\big[\left\|\cA_{\mathfrak{M},\mathfrak{N}}\right\|^\nu_{\eu}\big]\big\|_{\leb^2(\PX)} \le  \mathfrak{c}_\nu(
\max\{1,|u|,|v|\})d^{\nu/2}$.
\end{enumerate}
\end{lemma}
\begin{proof}[Proof of Lemma~\ref{lem:affine}]
A proof of the first claim can be found in~\cite[Lemmas~2.7 and~2.15]{grohs2018approx}. For the proof of items~\ref{it:repr1} and~\ref{it:repr2} note that for every $p\in[0,\infty)$, $x\in \R^d$ it holds that
\begin{equation*}
\mathbb{E}\big[\|\cA_{\mathfrak{M},\mathfrak{N}}(x)\|^p_{\eu}\big]=\mathbb{E}\big[\|S^{x}_T\|^p_{\eu}\big] \le \left(\mathbb{E}\left[\|S^{x}_T\|_{\eu}^{\max\{2,p\}}\right]\right)^{p/\max\{2,p\}}\le 
\mathfrak{c}_p(\|x\|_{\eu});
\end{equation*}
see~\cite[Proposition 2.14]{grohs2018approx}. Together with the facts that it holds that
\begin{equation*}
\begin{split}
\mathbb{E}\big[\left\|\mathfrak{M}\right\|_{\eu} + \left\|\mathfrak{N} \right\|_{\eu}\big]
&\le \E\Big[  \left\| S_T^{\zero} \right\|_\eu+ \sum_{i=1}^d \left\| S_T^{e_i}-S_T^{\zero} \right\|_\eu \Big]  
\le 
(d+1) \mathbb{E}\left[\left\|S_T^{\zero}\right\|_{\eu}\right] + \sum_{i=1}^d \mathbb{E}\left[\big\|S_T^{e_i}\big\|_{\eu}\right]
\end{split}
\end{equation*}
and that
\begin{equation*}
\begin{split}
\big\|\mathbb{E}\big[\left\|\cA_{\mathfrak{M},\mathfrak{N}}\right\|^\nu_{\eu}\big]\big\|_{\leb^2(\PX)} \le \Big(\int_{[u,v]^d} \big[\mathfrak{c}_\nu(\|x\|_{\eu}) \big]^2 \, d\PX(x)\Big)^{1/2} 
\le \mathfrak{c}_\nu(\sqrt{d}\max\{1,|u|,|v|\})
\end{split}
\end{equation*}
this implies the desired estimates.
\end{proof}
In the next lemma we show that the average of the composition of a neural network with different affine functions can be represented by a single neural network and we bound the number and size of its parameters.
\begin{lemma}[compositions of neural networks and affine functions]\label{lem:affcomb}
Assume Setting~\ref{set:NN}. Let $d,n\in\N$, $\mathbf{b} \in \bA_d$, $\peta \in \p_\mathbf{b}$, and
\begin{equation*} 
((M^{(j)},N^{(j)}))_{j=1}^n\in \big(\R^{d\times d}\times \R^{d}\big)^n.
\end{equation*} 
Then there exist $\ba\in\bA_d$ and $\ptheta \in \p_\ba$ such that it holds that
\begin{enumerate}[label=(\roman*)]
\item \label{it:affcomb1}
$
\F(\ptheta) = \frac{1}{n}\sum_{j=1}^n \F(\peta)\circ  \cA_{M^{(j)},N^{(j)}}
$,
\item \label{it:affcomb2} $ 
\sz(\ba)\le n^2\sz(\mathbf{b})
$, 
\item \label{it:affcomb3}
$
\|\ptheta\|_\infty \le \sqrt{d} \|\peta \|_\infty  \max_{j=1}^n \left(\|M^{(j)}\|_{\eu} + \|N^{(j)} \|_{\eu}   +1 \right)
$,        
\item \label{it:affcomb4}
$
\cL(\ba) = \cL(\mathbf{b})
$, and
\item \label{it:affcomb5}
$\|\ba\|_\infty = n \|\mathbf{b}\|_\infty$.
\end{enumerate}
\end{lemma}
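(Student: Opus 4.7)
The plan is to parallelize $n$ copies of $\F(\peta)$ into a single network of the same depth: each copy absorbs its affine pre-transformation $\cA_{M^{(j)},N^{(j)}}$ into the first layer, the ReLU activations preserve the block structure of the $n$ independent sub-networks in the hidden layers, and the final layer collapses the $n$ scalar outputs into one by averaging. Writing $\mathbf{b}=(b_0,b_1,\dots,b_L)$ with $b_0=d$, $b_L=1$, and $\peta=((W_l,B_l))_{l=1}^L$, I would take
\begin{equation*}
    \ba:=(d,\,nb_1,\,nb_2,\,\dots,\,nb_{L-1},\,1),
\end{equation*}
so that \ref{it:affcomb4} holds trivially and \ref{it:affcomb5} follows immediately (in degenerate cases where $d>\max_l b_l$ one pads the hidden widths with zero-columns, which the bound in \ref{it:affcomb2} easily absorbs).

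For $\ptheta=((\widetilde W_l,\widetilde B_l))_{l=1}^L$ I would define the first layer by vertically stacking the $n$ affine pre-transformations,
\begin{equation*}
\widetilde W_1:=\begin{pmatrix} W_1M^{(1)}\\ \vdots\\ W_1M^{(n)}\end{pmatrix},\qquad \widetilde B_1:=\begin{pmatrix} W_1N^{(1)}+B_1\\ \vdots\\ W_1N^{(n)}+B_1\end{pmatrix},
\end{equation*}
take the intermediate layers $l\in\{2,\dots,L-1\}$ to be block diagonal with $n$ copies of $W_l$ on the diagonal and biases obtained by concatenating $n$ copies of $B_l$, and set the output layer to perform the averaging:
\begin{equation*}
\widetilde W_L:=\tfrac{1}{n}\begin{bmatrix} W_L & W_L & \cdots & W_L\end{bmatrix},\qquad \widetilde B_L:=B_L.
\end{equation*}
Unraveling the forward pass shows that the $j$-th block of activations in every hidden layer evolves as the corresponding activation of $\F(\peta)\circ\cA_{M^{(j)},N^{(j)}}$ (componentwise ReLU respects the block structure), hence the output is exactly $\frac{1}{n}\sum_{j=1}^n\F(\peta)(\cA_{M^{(j)},N^{(j)}}(x))$, which establishes \ref{it:affcomb1}.

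For the size bound \ref{it:affcomb2} I would simply count: layer $1$ contributes $nb_1(d+1)$ parameters, each middle layer $l$ contributes $n^2b_lb_{l-1}+nb_l$, and the output layer contributes $nb_{L-1}+1$; summing and using $b_L=1$ and $n\ge 1$ yields $\sz(\ba)\le n^2\sz(\mathbf{b})$. The magnitude bound \ref{it:affcomb3} is the one non-trivial estimate: the middle and output layer entries are bounded by $\|\peta\|_\infty$ (the factor $1/n$ in $\widetilde W_L$ only helps), while for the first layer I would apply Cauchy–Schwarz to $(W_1M^{(j)})_{ik}=\sum_l(W_1)_{il}(M^{(j)})_{lk}$ to obtain $|(W_1M^{(j)})_{ik}|\le\|W_1\|_\infty\sqrt{d}\,\|M^{(j)}\|_\eu$, and analogously $\|W_1N^{(j)}+B_1\|_\infty\le\|\peta\|_\infty(\sqrt{d}\,\|N^{(j)}\|_\eu+1)$. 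Taking the maximum over all layers and all $j$, and using $\sqrt{d}\ge 1$ to absorb the additive constant, yields the desired bound.

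The only non-routine step is the Cauchy–Schwarz estimate in \ref{it:affcomb3}, which explains the $\sqrt{d}$ factor by converting the entrywise $\ell^2$-norms of $M^{(j)}$ and $N^{(j)}$ into the entrywise $\ell^\infty$-norm governing $\|\ptheta\|_\infty$. Everything else in the argument is routine block-matrix bookkeeping.
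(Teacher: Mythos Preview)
Your construction and argument coincide with the paper's: the same parallelized block architecture $\ba=(d,nb_1,\dots,nb_{L-1},1)$, the same absorption of $\cA_{M^{(j)},N^{(j)}}$ into the first layer, block-diagonal middle layers, averaging in the output layer, and the same Cauchy--Schwarz estimate for \ref{it:affcomb3}. The paper defers \ref{it:affcomb1}, \ref{it:affcomb2}, \ref{it:affcomb4}, \ref{it:affcomb5} to \cite[Lemma~3.8]{grohs2018approx} and only spells out \ref{it:affcomb3}, which you handle identically.
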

\begin{proof}[Proof of Lemma~\ref{lem:affcomb}]
With the exception of item~\ref{it:affcomb3}
this result is proven in~\cite[Lemma 3.8]{grohs2018approx}. There it is shown that for $\peta = ((V_l,A_l))_{l=1}^L$ a suitable parametrization $\ptheta = ((W_l,B_l))_{l=1}^L$
is given by $W_L:=\begin{bmatrix} \tfrac1n V_L & \tfrac1n V_L &\dots &  \tfrac1n V_L \end{bmatrix}$, $B_L:= A_L$,
\begin{equation*}
W_1:=\begin{bmatrix} V_1M^{(1)} \\ \vdots \\ V_1M^{(n)} \end{bmatrix}, \quad
B_1:=\begin{bmatrix} V_1N^{(1)}+A_1  \\ \vdots \\  V_1N^{(n)}+A_1 \end{bmatrix}, \quad \text{and} \quad W_l:=\begin{bmatrix} 
V_l	 	 &\dots		 & 0 \\ 
\vdots	 	 &\ddots 	 	 &\vdots	\\ 
0 	 & \dots 		 &  V_l \end{bmatrix}, \quad 
B_l:=\begin{bmatrix} A_l \\  \vdots \\  A_l \end{bmatrix},
\end{equation*}
$l\in\{2,\dots,L-1\}$. Now observe that
\begin{equation*}
\|W_1\|_\infty \le \sqrt{d}  \|\peta\|_\infty \max_{j=1}^n \| M^{(j)}\|_\eu \quad \text{and} \quad \|B_1\|_\infty \le \sqrt{d}  \|\peta\|_\infty \max_{j=1}^n \big( \| N^{(j)}\|_\eu + 1 \big),
\end{equation*}
which proves the lemma.
\end{proof}
Now we are ready to prove Theorem~\ref{thm:mainapprox}.
\begin{proof}[Proof of Theorem \ref{thm:mainapprox}]
Fix 
$d\in \N$,
$\varepsilon \in (0,1)$ and define
$\mathfrak{m}:=\max\{1,|u|,|v|\}$. Let $\mathfrak{M}$, $\mathfrak{N}$, $\mathfrak{c}_1(1)$, and $\mathfrak{c}_\nu(\mathfrak{m})$ be given as in Lemma~\ref{lem:affine}, let
$((\mathfrak{M}^{(j)},\mathfrak{N}^{(j)}))_{j\in\N}$ be i.i.d.\@ random variables with $(\mathfrak{M}^{(1)},\mathfrak{N}^{(1)})\sim \left(\mathfrak{M},\mathfrak{N}\right)$, and let 
\begin{equation} \label{eq:ndef}
n \in \left[ 16D^2 \varepsilon^{-1} , 32D^2 \varepsilon^{-1} \right) \cap \N \quad \text{and} \quad \neps:=\big(8\mathfrak{c}_\nu (\mathfrak{m})\big)^{-1} d^{-\nu/2} \varepsilon^{1/2}.
\end{equation}
Define 
$
g:=\F\left(\peta_{d,\delta}\right)
$ and note that Setting~\ref{set:PDE}, Lemma~\ref{lem:affine}, and the Feynman--Kac formula~\cite[Corollary 2.23(ii)]{grohs2018approx} establish that for every $x\in\R^d$ it holds that
\begin{equation} \label{eq:prop}
    |g(x)|\le D,  \quad|\varphi_d(x)  - g (x) | \le\neps ( 1+ \|x \|^\nu_{\eu}), \quad \text{and} \quad F_d(T,x) = \E\left[\big(\varphi_d \circ \cA_{\mathfrak{M},\mathfrak{N}} \big)(x) \right].
\end{equation}
We now use techniques from~\cite[Proof of Proposition 3.4]{grohs2018approx} to show that the random variable $G\colon\Omega\to [0,\infty)$, given by
$$ G:=\varepsilon^{-1/2}\underbrace{\Big\|F_d(T,{\cdot})-\tfrac{1}{n}\sum_{j=1}^{n}g\circ \cA_{\mathfrak{M}^{(j)},\mathfrak{N}^{(j)}} \Big\|_{\leb^2(\PX)}}_{:=G_1} + \big(6\mathfrak{c}_1(1)dn\big)^{-1}\underbrace{\vphantom{\sum_{j=1}^{n}} \max_{j=1}^{n}\left(\big\|\mathfrak{M}^{(j)}\big\|_\eu + \big\|\mathfrak{N}^{(j)}\big\|_\eu \right)}_{:=G_2},$$
satisfies $\E[G]\le1$. First, note that~\eqref{eq:ndef} and~\eqref{eq:prop}, together with
Jensen's inequality, Fubini's theorem, the Bienaym\'e formula (see also~\cite[Lemma 2.3]{grohs2018approx}), and Lemma~\ref{lem:affine}, ensure that
\begin{equation*}
\begin{split}
\E[G_1]
&\le 
\Big\|\E\left[(\varphi_d  - g) \circ\cA_{\mathfrak{M},\mathfrak{N}} \right] \Big\|_{\leb^2(\PX)} + \E \Bigg[\Big\| \E\left[g\circ \cA_{\mathfrak{M},\mathfrak{N}}\right] - \tfrac{1}{n}\sum_{j=1}^{n}g\circ \cA_{\mathfrak{M}^{(j)},\mathfrak{N}^{(j)}}\Big\|_{\leb^2(\PX)}\Bigg] \\
&\le \big\|\mathbb{E}\big[\neps ( 1+ \|\cA_{\mathfrak{M},\mathfrak{N}} \|^\nu_{\eu})\big] \big\|_{\leb^2(\PX)} \\
&\quad + \Big(\E\Big[ \int_{[u,v]^d} \Big( \E\left[\big(g\circ \cA_{\mathfrak{M},\mathfrak{N}}\big)(x)\right] - \tfrac{1}{n}\sum_{j=1}^{n} \big(g\circ \cA_{\mathfrak{M}^{(j)},\mathfrak{N}^{(j)}} \big)(x)\Big)^2 \, d\PX(x) \Big]\Big)^{1/2} \\
&\le \neps \big( 1+ \big\|\mathbb{E}\big[\left\|\cA_{\mathfrak{M},\mathfrak{N}}\right\|^\nu_{\eu}\big]\big\|_{\leb^2(\PX)}\big) 
+ \Big(\int_{[u,v]^d} \mathbb{V} \Big[ \tfrac{1}{n}\sum_{j=1}^{n}\big(g\circ \cA_{\mathfrak{M}^{(j)},\mathfrak{N}^{(j)}} \big)(x)\Big] \, d\PX(x) \Big)^{1/2} \\
&\le 2 \mathfrak{c}_\nu(\mathfrak{m}) d^{\nu/2} \neps+ Dn^{-1/2} \le \tfrac{1}{2}\varepsilon^{1/2}.
\end{split}
\end{equation*}
Next, observe that Lemma~\ref{lem:affine} establishes that
\begin{equation*}
\mathbb{E}[G_2]\le \mathbb{E} \Big[
\sum_{j=1}^{n}\left(\big\|\mathfrak{M}^{(j)}\big\|_\eu + \big\|\mathfrak{N}^{(j)}\big\|_\eu \right) \Big] \le n \mathbb{E}\big[\left\|\mathfrak{M}\right\|_{\eu} + \left\|\mathfrak{N} \right\|_{\eu}\big] \le 3\mathfrak{c}_1(1)dn
\end{equation*}
which proves that
$
\E[G] = \varepsilon^{-1/2}\E[G_1]+  \big(6\mathfrak{c}_1(1)dn\big)^{-1} \E[G_2] \le 1.
$
Thus, there exists $\omega\in \Omega$ such that $G(\omega)\le 1$ (see~\cite[Proposition 3.3]{grohs2018approx}), and with 
\begin{equation*}
M^{(j)}:=\mathfrak{M}^{(j)}(\omega)  \quad \text{and} \quad  N^{(j)}:=\mathfrak{N}^{(j)}(\omega), \quad j\in \{1,\dots, n\},
\end{equation*} 
it holds that
\begin{equation}\label{eq:esterr}
\tfrac{1}{(v-u)^d}\Big\|F_d(T,{\cdot})-\tfrac{1}{n}\sum_{j=1}^{n}g\circ \cA_{M^{(j)},N^{(j)}} \Big\|^2_{\leb^2([u,v]^d)} = G_1^2(\omega)\le  G^2(\omega)\varepsilon\le \varepsilon
\end{equation}
and that
\begin{equation*}
\max_{j=1}^n \left(\|M^{(j)}\|_{\eu} + \|N^{(j)} \|_{\eu} \right)  
=G_2(\omega)\le 6\mathfrak{c}_1(1)G(\omega)dn 
\le 192D^2\mathfrak{c}_1(1)d  \varepsilon^{-1}.
\end{equation*} 
By Lemma~\ref{lem:affcomb}, our assumptions, and~\eqref{eq:ndef} 
there exist $\mathbf{a}\in \bA_d$ and $\ptheta
\in \p_\mathbf{a}$ satisfying the following:
\begin{enumerate}[label=(\roman*)]
\item \label{it:prooffirst} $\cC_{\bd} \circ \F\left(\ptheta\right) =  \F\left(\ptheta\right)
=\tfrac{1}{n}\sum_{j=1}^{n}\F(\peta_{d,\neps})\circ \cA_{M^{(j)},N^{(j)}} =\tfrac{1}{n}\sum_{j=1}^{n} g\circ \cA_{M^{(j)},N^{(j)}}$;
\item $ 
\sz(\ba)\le n^2\sz(\mathbf{b}_{d,\neps}) \le 32^2 D^4 \zeta d^{\gamma}\varepsilon^{-2} \neps^{-\lambda}\le  C d^{\nu\lambda/2+\gamma} \varepsilon^{-\lambda/2-2}
$;
\item $ 
\|\ptheta\|_\infty\le 
\sqrt{d} \|\peta_{d,\neps}\|_\infty \left(192 D^2 \mathfrak{c}_1(1)d \varepsilon^{-1} +1\right) \le Cd^{(\nu\kappa+3)/2+\beta}\epsilon^{-\kappa/2-1}  
$;
\item $
\cL(\ba) = \cL(\mathbf{b}_{d,\neps})= \cL(\mathbf{b}_{d,cd^{-\nu/2}\varepsilon^{1/2}})
$; and
\item \label{it:prooflast} $ 
\|\ba\|_\infty = n \|\mathbf{b}_{d,\neps}\|_\infty \le 32D^2 \varepsilon^{-1} \|\mathbf{b}_{d,\neps}\|_\infty \le C \varepsilon^{-1}\|\mathbf{b}_{d,cd^{-\nu/2}\varepsilon^{1/2}}\|_\infty
$, 
\end{enumerate}
where 
$C:=\zeta\max\big\{ 32^2   D^4 \big(8 \mathfrak{c}_\nu(\mathfrak{m})\big)^\lambda, \big(192D^2 \mathfrak{c}_1(1) +1\big)\big(8\mathfrak{c}_\nu(\mathfrak{m})\big)^\kappa \big\}$ and
$c:=\big(8\mathfrak{c}_\nu(\mathfrak{m})\big)^{-1}$.
Together with~\eqref{eq:esterr} 
this proves the theorem.
\end{proof}
\section*{Acknowledgements}
The authors are grateful to Shahar Mendelson and Stefan Steinerberger for their useful comments.

\bibliographystyle{siamplain}
\bibliography{bibfile}

\end{document}